\newtcbox{\inlinecode}{on line, boxrule=0pt, boxsep=0pt, top=1pt, left=1pt, bottom=1pt, right=1pt, colback=white!15, colframe=white, fontupper={\ttfamily \footnotesize}}
\definecolor{lightblue}{rgb}{.8,.95,1}
\definecolor{LightBlue}{rgb}{.8,.9,1}
\definecolor{codegreen}{rgb}{0,0.6,0}
\definecolor{codegray}{rgb}{0.5,0.5,0.5}
\definecolor{codepurple}{rgb}{0.58,0,0.82}
\definecolor{backcolour}{rgb}{0.95,0.95,0.92}
\lstdefinestyle{mystyle}{
    backgroundcolor=\color{backcolour},   
    commentstyle=\color{codegreen},
    keywordstyle=\color{magenta},
    numberstyle=\tiny\color{codegray},
    stringstyle=\color{codepurple},
    basicstyle=\ttfamily\footnotesize,
    breakatwhitespace=false,         
    breaklines=true,                 
    captionpos=b,                    
    keepspaces=false,                 
    numbers=left,                    
    numbersep=2pt,                  
    showspaces=false,                
    showstringspaces=false,
    showtabs=false,                  
    tabsize=2
}
\newtheorem{theorem}{Theorem}[section]
\newtheorem{lemma}[theorem]{Lemma}
\theoremstyle{definition}
\newtheorem{definition}[theorem]{Definition}
\theoremstyle{remark}
\def\@fnsymbol#1{\ensuremath{\ifcase#1\or *\or \dagger\or \ddagger\or
   \mathsection\or \mathparagraph\or \|\or **\or \dagger\dagger
   \or \ddagger\ddagger \else\@ctrerr\fi}}
\title{Cross-Domain Policy Adaptation via Value-Guided Data Filtering}
\author{%
  Kang Xu$^{\text{1}~\text{2}}$\thanks{Part of this work was done during Kang Xu's internship at Shanghai Artificial Intelligence Laboratory.}\qquad
  Chenjia Bai$^\text{2}$\thanks{Correspondence to Chenjia Bai <baichenjia@pjlab.org.cn>, Wei Li <fd$\_$liwei@fudan.edu.cn>.}\qquad
  Xiaoteng Ma$^\text{3}$\qquad
  Dong Wang$^\text{2}$\qquad
  Bin Zhao$^{\text{2}~\text{4}}$\qquad\\
  \AND
  \textbf{Zhen Wang$^\text{4}$}\qquad
  \textbf{Xuelong Li$^{\text{2}~\text{4}}$}\qquad
  \textbf{Wei Li$^{\text{1}\dagger}$}\qquad
  \AND
  \textnormal{$^\text{1}~$Fudan University}\quad
  \textnormal{$^\text{2}~$Shanghai Artificial Intelligence Laboratory}\quad
  \textnormal{$^\text{3}~$Tsinghua University}\\
  \textnormal{$^\text{4}~$Northwestern Polytechnical University}\quad
}
\begin{document}

\maketitle

\begin{abstract}
Generalizing policies across different domains with dynamics mismatch poses a significant challenge in reinforcement learning. For example, a robot learns the policy in a simulator, but when it is deployed in the real world, the dynamics of the environment may be different. Given the source and target domain with dynamics mismatch, we consider the online dynamics adaptation problem, in which case the agent can access sufficient source domain data while online interactions with the target domain are limited. Existing research has attempted to solve the problem from the dynamics discrepancy perspective. In this work, we reveal the limitations of these methods and explore the problem from the value difference perspective via a novel insight on the value consistency across domains. Specifically, we present the Value-Guided Data Filtering (VGDF) algorithm, which selectively shares transitions from the source domain based on the proximity of paired value targets across the two domains. Empirical results on various environments with kinematic and morphology shifts demonstrate that our method achieves superior performance compared to prior approaches.
\end{abstract}

\section{Introduction}
Reinforcement Learning (RL) has demonstrated the ability to train highly effective policies with complex behaviors through extensive interactions with the environment~\cite{silver2017mastering,schrittwieser2020mastering,bai2021principled}. However, in many situations, extensive interactions are infeasible due to the data collection costs and the potential safety hazards associated with domains such as robotics~\cite{kober2013reinforcement} and medical treatments~\cite{raghu2017continuous}. To address the issue, one approach is to interact with a surrogate environment, such as a simulator, and then transfer the learned policy to the original domain. However, an unbiased simulator may be unavailable due to the complex system dynamics or unexpected disturbances in the target scenario, leading to a dynamics mismatch. Such a mismatch is crucial for the sim-to-real problem in robotics~\cite{akkaya2019solving,lee2020learning,o2022neural} and may cause performance degradation of the learned policy in the target domain. In this work, we focus on the dynamics adaptation problem, where we aim to train a well-performing policy for the target domain, given the source domain with the dynamics mismatch.

Recent research has tackled the adaptation over dynamics mismatch through various techniques, such as domain randomization~\cite{rajeswaran2016epopt,peng2018sim,mehta2020active}, system identification~\cite{yu2017preparing}, or simulator calibration~\cite{chebotar2019closing}, that require domain knowledge or privileged access to the physical system. Other methods have explored the adaptation problem in specific scenarios, such as those with expert demonstrations~\cite{liu2019state,kim2020domain} or offline datasets~\cite{liu2021dara,niu2022trust}, while the effectiveness of these methods heavily depends on the optimality of demonstrations or the quality of the datasets. In contrast to these works, we consider a more general setting called \textit{online dynamics adaptation}, where the agent can access sufficient source domain data and a limited number of online interactions with the target domain. We compare the settings for the dynamics adaptation problem in Figure~\ref{fig:problem_settings}.

\begin{figure}[t]
    \centering
    \includegraphics[width=0.975\textwidth]{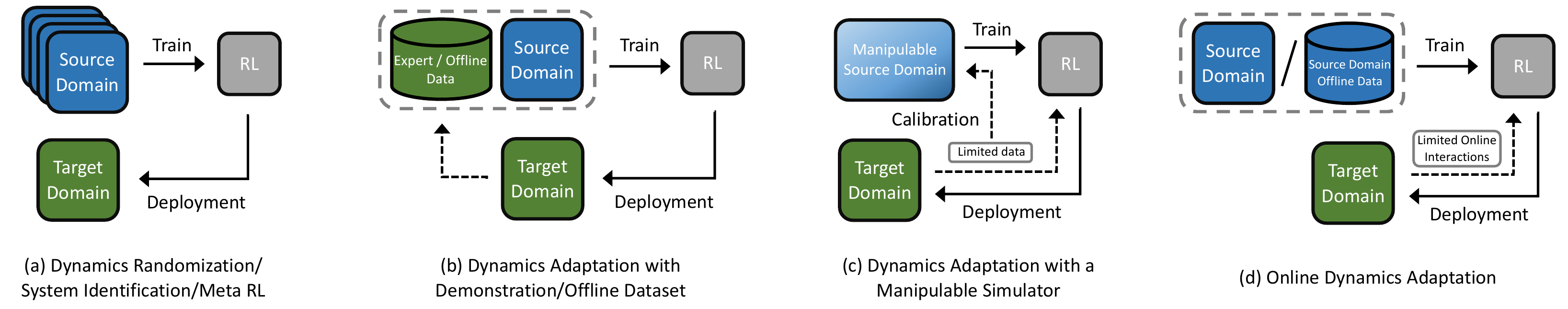}
    \caption{Semantic illustration of main settings for dynamics adaptation problem. Methods in the first three categories require different assumptions, such as a wide range of source domains, demonstrations from the target domain, or a manipulable simulator. We focus on a more general setting, online dynamics adaptation, only requiring limited online interactions with the target domain.}
    \label{fig:problem_settings}
\end{figure}

To address the online dynamics adaptation problem, prior works mainly focus on the single-step dynamics discrepancy and practically eliminating the gap via different ways~\cite{eysenbach2020off,desai2020imitation}. However, we empirically demonstrate the limitation of the methods through a motivation example, suggesting their effectiveness heavily relies on strong assumptions about the transferability of paired domains. Theoretically, we formulate the performance bound of the learned policy with respect to the dynamics discrepancy term, which provides an explicit interpretation of the results. To address the problem, we focus on the value discrepancy between paired transitions across domains, motivated by the key idea: \textit{the transitions with consistent value targets can be seen as equivalent for policy adaptation}. 
Based on the insight, we proposed a simple yet efficient algorithm called Value-Guided Data Filtering (VGDF) for online dynamics adaptation via selective data sharing. Specifically, we use a learned target domain dynamics model to obtain paired transitions based on the source domain state-action pair. The transitions are shared from the source to the target domain only if the value targets of the imagined target domain transition and that of the source domain transition are close. Compared to previous methods that utilize the single-step dynamics gap, our method measures value discrepancies to capture long-term differences between two domains for better adaptation.

Our contributions can be summarized as follows: 1) We reveal the limitations of prior dynamics-based methods and propose the value discrepancy perspective with theoretical analysis. 2) To provide a practical instantiation, we propose VGDF for online dynamics adaptation via selective data sharing. 3) We extend VGDF to a more practical setting with an offline source domain dataset and propose a variant algorithm motivated by novel theoretical results. 4) We empirically demonstrate the superior performance of our method given significant dynamics shifts, including kinematics and morphology mismatch, compared to previous methods.

\section{Related Work}
\label{sec:related work}

\textbf{Domain adaptation in RL.}~Different from domain adaptation in supervised learning where different domains correspond to distinct data distributions~\cite{kouw2019review}, different domains in RL can differ in
observation space~\cite{hansen2020self}, transition dynamics~\cite{rajeswaran2016epopt,yu2017preparing,eysenbach2020off}, embodiment~\cite{zhang2020learning,liu2022revolver}, or reward functions~\cite{duan2016rl,zintgraf2019varibad,rakelly2019efficient}. In this work, we focus on domain adaptation with dynamics discrepancies. Prior works utilizing meta RL~\cite{yu2018policy,nagabandi2018learning,raileanu2020fast}, domain randomization~\cite{rajeswaran2016epopt,peng2018sim,mehta2020active}, and system identification~\cite{zhou2018environment,yu2017preparing,du2021auto,xie2022robust} all assume the access to the distribution of training environments and rely on the hypothesis that the source and target domains are drawn from the same distribution. Another line of work has proposed to handle domain adaptation given expert demonstrations from the target domain~\cite{liu2019state,kim2020domain,hejna2020hierarchically}. These approaches align the state visitation distributions of the trained policy in the source domain to the distribution of the expert demonstrations in the target domain through state-action correspondences~\cite{zhang2020learning} or imitation learning~\cite{ho2016generative,fu2018learning,torabi2018generative}. However, near-optimal demonstrations can be challenging to acquire in some tasks. More recent works have explored the dynamics adaptation given an offline dataset collected in the target domain~\cite{liu2021dara,niu2022trust}, while the performance of the trained policy depends on the quality of the dataset~\cite{ostrovski2021difficulty}. Orthogonal to these settings, we focus on a general paradigm where a relatively small number of online interactions with the target domain are accessible.

\textbf{Online dynamics adaptation.}~Given limited online interactions with the target domain, several works calibrate the dynamics of the source domain by adjusting the physical parameters of the simulator~\cite{chebotar2019closing,ramos2019bayessim,du2021auto,muratore2021data}, while they assume the access of a manipulable simulator. Action transformation methods correct the transitions collected in the source domain by learning dynamics models of the two domains~\cite{hanna2017grounded,desai2020imitation,zhang2021policy}. However, the learned model can be inaccurate, which results in model exploitation and performance degradation~\cite{janner2019trust,kidambi2020morel}. Furthermore, the work that compensates the dynamics gap by modifying the reward function~\cite{eysenbach2020off} is practical only if the policy that performs well in both domains exists. Instead, we do not assume the dynamics-agnostic policy exists and demonstrate the effectiveness of our method when such an assumption does not hold.

\textbf{Knowledge transfer in RL.}~Knowledge transfer has been proposed to reuse the knowledge from other tasks to boost the training for the current task~\cite{taylor2009transfer,lazaric2012transfer}. The transferred knowledge can be modules~(\textit{e.g.}, policy)~\cite{parisotto2016actor,cheng2020policy,barekatain2021multipolar}, representations~\cite{ barreto2018transfer}, and experiences~\cite{isele2018selective,li2020generalized,yu2021conservative,tao2021repaint}. Our method is related to works transferring experiences. However, prior works focus on transferring between tasks with different reward functions instead of dynamics. When the dynamics changes, the direct adoption of commonly used temporal difference error~\cite{sutton1988learning} or advantage function~\cite{schulman2015high} in previous works ~\cite{isele2018selective,li2020generalized,tao2021repaint} would be inappropriate due to the shifted transition probabilities across domains. In contrast, we introduce novel measurements to evaluate the usefulness of the source domain transitions to tackle the dynamics shift problem specifically.

\textbf{Theories on learning with dynamics mismatch.}~The performance guarantee of a policy trained with imaginary transitions from an inaccurate dynamics model has been analyzed in prior Dyna-style~\cite{sutton1990integrated,sutton1991dyna,sutton2008dyna} model-based RL algorithms~\cite{luo2018algorithmic,janner2019trust,shen2020model}. The theoretical results inspire us to formulate performance guarantees in the context of dynamics adaptation.

\section{Preliminaries and Problem Statement}

We consider two infinite-horizon Markov Decision Processes~(MDP) $\mathcal{M}_{src}:=\left(\mathcal{S},\mathcal{A}, P_{src},r,\gamma,\rho_0\right)$ and $\mathcal{M}_{tar}:=\left(\mathcal{S},\mathcal{A}, P_{tar},r,\gamma,\rho_0\right)$ for the source domain and the target domain, respectively. The two domains share the same state space $\mathcal{S}$, action space $\mathcal{A}$, reward function $r:\mathcal{S}\times\mathcal{A} \to \mathbb{R}$ with range $[0,r_{\rm max}]$, discount factor $\gamma\in[0,1)$, and the initial state distribution $\rho_0: \mathcal{S}\to [0,1]$. The two domains differ on the transition probabilities, \textit{i.e.}, $P_{src}(s'|s, a)$ and $P_{tar}(s'|s, a)$.

We define the probability that a policy $\pi$ encounters state $s$ at the time step $t$ in MDP $\mathcal{M}$ as $\mathrm{P}_{\mathcal{M},t}^\pi(s)$. We denote the normalized probability that a policy $\pi$ encounters state $s$ in $\mathcal{M}$ as $\nu_{\mathcal{M}}^\pi(s) := (1-\gamma)\sum_{t=0}^\infty \gamma^t \mathrm{P}_{\mathcal{M},t}^\pi(s)$, and the normalized probability that a policy encounters state-action pair $(s,a)$ in $\mathcal{M}$ is $\rho_{\mathcal{M}}^\pi(s,a) := (1-\gamma)\sum_{t=0}^\infty \gamma^t \mathrm{P}_{\mathcal{M},t}^\pi(s)\pi(a|s)$. The performance of a policy $\pi$ in $\mathcal{M}$ as is formally defined as $\eta_\mathcal{M}(\pi) := \mathbb{E}_{s,a\sim\rho^\pi_{\mathcal{M}}}\left[r(s,a)\right]$.

We focus on the online dynamics adaptation problem where limited online interactions with the target domain are accessible, which can be defined as follows:
\begin{definition}
    \label{def:setting}
    \textbf{(Online Dynamics Adaptation)}  \textit{Given source domain $\mathcal{M}_{src}$ and target domain $\mathcal{M}_{tar}$ with different dynamics, we assume sufficient data from the source domain~(online or offline) and a relatively small number of online interactions with $\mathcal{M}_{tar}$~(\textit{e.g.}, $\Gamma:=\frac{\#~\text{source domain data}}{\#~\text{target domain data}}=10$), hoping to obtain a near-optimal policy $\pi$ concerning the target domain $\mathcal{M}_{tar}$}.
\end{definition} 

The prior work~\cite{eysenbach2020off} also focuses on the online dynamics adaptation problem with online source domain interactions. The proposed algorithm DARC estimates the dynamics discrepancy via learned domain classifiers and further introduces a reward correction~(\textit{i.e.},~$\Delta r(s, a, s') \approx \log \left(P_{tar}(s'|s, a)/P_{src}(s'|s, a)\right)$) to optimize policy together with the task reward $r$~(\textit{i.e.},~$r(s, a) + \Delta r(s, a, s')$), discouraging the agent from dynamics-inconsistent behaviors in the source domain.

\section{
Guaranteeing Policy Performance from a Value Discrepancy Perspective
}
\label{sec:motivation}

In this section, we will first present an example demonstrating the limitation of the prior method considering the dynamics discrepancy. Following that, we provide a theoretical analysis of the dynamics-based method to provide an interpretation of the experiment results. Finally, we introduce a novel perspective on value discrepancies across domains for the online dynamics adaptation problem.

\begin{figure}[t]
    \centering
    \includegraphics[width=\textwidth]{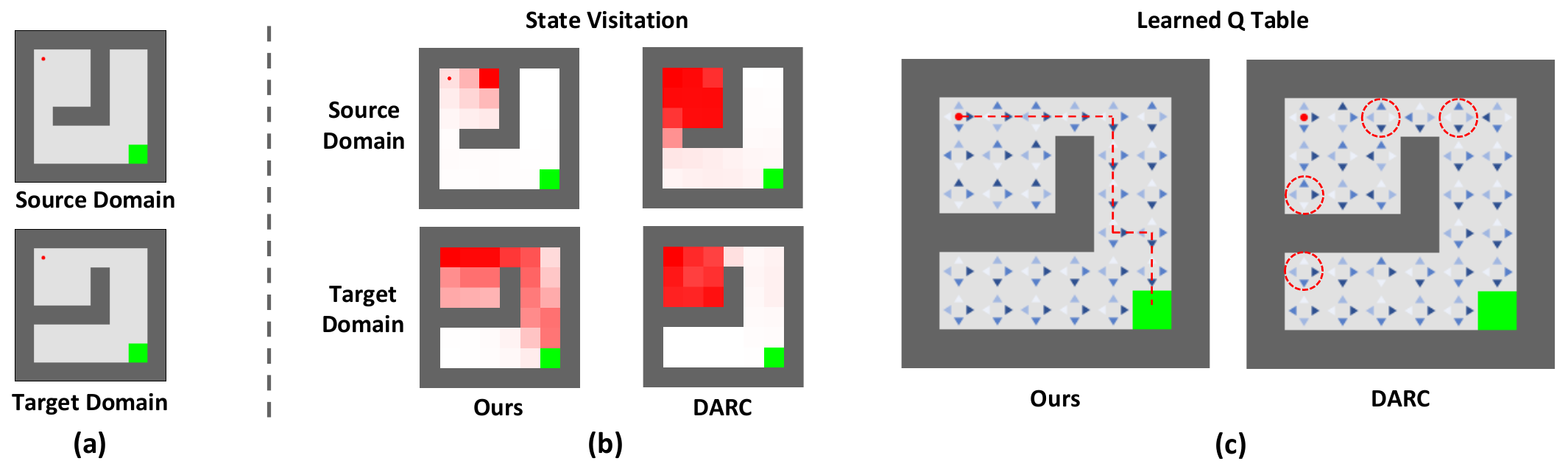}
    \caption{The illustrations and results of the motivation experiment. (a)~Illustration of the source and target domains in the grid world environment. The red dot and green square represent the agent and goal, respectively. (b)~Visualization of the state visitation in both domains. The darker color suggests higher visitation probabilities. Our method guides the agent to reach regions with high target domain values while the agent trained by DARC is stuck in the room. (c)~Visualization of the learned Q tables. Four triangles represent four actions; the darker color suggests a higher value estimation. Our method learns the optimal Q table whose greedy policy leads the agent to the goal of the target domain, while DARC fails due to pessimistic values of the crucial state-action pairs with dynamics mismatch.}
    \label{fig:toy_exp}
\end{figure}

\subsection{Motivation Example}
\label{exp:toy_example}

We start with a 2D grid world task shown in Figure~\ref{fig:toy_exp}~(a), where the agent represented by the red dot needs to navigate to the green square representing the goal. We design source and target domains with different layouts and train a policy to reach the goal successfully in the target domain. We investigate the performance of DARC~\cite{eysenbach2020off} that trains the policy with dynamics-guided reward correction and our proposed method~(Section~\ref{method}), using tabular $Q$-learning~\cite{watkins1992q} as the backbone for all methods. Detailed environment settings are shown in Appendix~\ref{apd:env_setting}.

As the empirical state visitations and the learned $Q$ tables show in Figure~\ref{fig:toy_exp}, DARC is stuck in the room and fails to obtain near-optimal $Q$-values, leading to poor performance. Specifically, we circle out four positions where specific actions will lead to the states with a dynamics mismatch concerning the two domains. Due to the introduced reward correction on the source domain transitions with dynamics mismatch, DARC learns overly pessimistic value estimations of particular state-action pairs, which hinders the agent from the optimal trajectory concerning the target domain. However, the values of the following inconsistent states, induced by the particular state-action pairs, are not significantly different concerning the target domain. The value difference quantifies the discrepancy of the long-term behaviors rather than single-step dynamics. Motivated by the value discrepancy perspective, our proposed method (Section~\ref{method:fvp}) demonstrates superior performance.

\subsection{Theoretical Interpretations and Value Discrepancy Perspective}
\label{sec:motivation:value_difference}

To provide rigorous interpretations for the results, we derive a performance guarantee for the dynamics-guided methods, which mainly build on the theories proposed in prior methods~\cite{janner2019trust,eysenbach2020off}.
\begin{theorem}
    \label{thm:dynamics_bound}
    \rm{\textbf{(Performance bound controlled by dynamics discrepancy.)}}
    \textit{
        Denote the source domain and target domain with different dynamics as $\mathcal{M}_{src}$ and $\mathcal{M}_{tar}$, respectively. 
        We have the performance difference of any policy $\pi$ evaluated under $\mathcal{M}_{src}$ and $\mathcal{M}_{tar}$ be bounded as below,
        \begin{equation}
            \eta_{\mathcal{M}_{tar}}(\pi) \geq \eta_{\mathcal{M}_{src}}(\pi)- \dfrac{2\gamma r_{\rm max}}{(1-\gamma)^2}\cdot\underbrace{\mathbb{E}_{\rho_{src}^\pi}\left[D_{\rm TV}\left(P_{src}(\cdot|s,a)\| P_{tar}(\cdot|s,a)\right)\right]}_{(a)\:\text{dynamics discrepancy}}.\label{thm:41.1}
        \end{equation}
    }
\end{theorem}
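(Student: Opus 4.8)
The plan is to bound the performance difference $\eta_{\mathcal{M}_{tar}}(\pi) - \eta_{\mathcal{M}_{src}}(\pi)$ by relating both quantities to a common reference through a telescoping argument over the occupancy measures induced by $\pi$ in the two MDPs. First I would recall the standard identity $\eta_{\mathcal{M}}(\pi) = \frac{1}{1-\gamma}\mathbb{E}_{s,a\sim\rho^\pi_{\mathcal{M}}}[r(s,a)]$ or, equivalently, work with the value function $V^\pi_{\mathcal{M}_{tar}}$ and write the gap as an expectation of a one-step Bellman-type residual under the source-domain occupancy. Concretely, using the simulation-lemma style decomposition, $\eta_{\mathcal{M}_{tar}}(\pi) - \eta_{\mathcal{M}_{src}}(\pi) = \frac{\gamma}{1-\gamma}\,\mathbb{E}_{(s,a)\sim\rho^\pi_{\mathcal{M}_{src}}}\big[\mathbb{E}_{s'\sim P_{tar}(\cdot|s,a)}V^\pi_{\mathcal{M}_{tar}}(s') - \mathbb{E}_{s'\sim P_{src}(\cdot|s,a)}V^\pi_{\mathcal{M}_{tar}}(s')\big]$, which isolates exactly the difference of next-state distributions evaluated against a fixed value function.

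Next I would control the inner difference of expectations. Since the reward lies in $[0,r_{\rm max}]$ and the discount is $\gamma$, the value function satisfies $0 \le V^\pi_{\mathcal{M}_{tar}}(s') \le \frac{r_{\rm max}}{1-\gamma}$, so $\|V^\pi_{\mathcal{M}_{tar}}\|_\infty \le \frac{r_{\rm max}}{1-\gamma}$. The difference $\mathbb{E}_{P_{tar}}V - \mathbb{E}_{P_{src}}V$ is then bounded in absolute value by $2\|V\|_\infty \, D_{\rm TV}(P_{src}(\cdot|s,a)\|P_{tar}(\cdot|s,a))$ using the dual/variational characterization of total variation distance (integrating a bounded function against the difference of two probability measures). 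Plugging this back and pulling the absolute value inside the expectation over $\rho^\pi_{\mathcal{M}_{src}}$ yields the factor $\frac{\gamma}{1-\gamma}\cdot 2\cdot\frac{r_{\rm max}}{1-\gamma} = \frac{2\gamma r_{\rm max}}{(1-\gamma)^2}$ multiplying $\mathbb{E}_{\rho^\pi_{src}}[D_{\rm TV}(P_{src}\|P_{tar})]$, and dropping the lower bound on the one-sided inequality gives precisely \eqref{thm:41.1}.

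The step I expect to require the most care is establishing the telescoping identity rigorously in the infinite-horizon discounted setting — making sure the occupancy measure $\rho^\pi_{\mathcal{M}_{src}}$ appears (and not a mixture of both), and that the series rearrangements converge. The cleanest route is to start from $\eta_{\mathcal{M}_{tar}}(\pi) = \mathbb{E}_{s_0\sim\rho_0}V^\pi_{\mathcal{M}_{tar}}(s_0)$, add and subtract the same value function evaluated along trajectories generated under $P_{src}$, and collapse the resulting telescoping sum using the Bellman equation $V^\pi_{\mathcal{M}_{tar}}(s) = \mathbb{E}_{a\sim\pi}[r(s,a) + \gamma\mathbb{E}_{s'\sim P_{tar}}V^\pi_{\mathcal{M}_{tar}}(s')]$; the reward terms cancel because both MDPs share $r$, leaving only the discounted dynamics-gap term weighted by $\rho^\pi_{\mathcal{M}_{src}}$. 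This is essentially the argument underlying the model-based bounds of \cite{janner2019trust} and the DARC analysis of \cite{eysenbach2020off}, specialized to the case where only the transition kernel differs, so I would cite those and adapt the bookkeeping rather than reprove the simulation lemma from scratch.
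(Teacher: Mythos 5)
Your proposal is correct and follows essentially the same route as the paper: the telescoping (simulation-lemma) identity of Lemma~\ref{lemma:telescoping_apd}, expressing the gap as $\frac{\gamma}{1-\gamma}\mathbb{E}_{\rho^\pi_{src}}\bigl[\mathbb{E}_{P_{src}}V^\pi_{tar}-\mathbb{E}_{P_{tar}}V^\pi_{tar}\bigr]$, followed by bounding $\|V^\pi_{\mathcal{M}_{tar}}\|_\infty\le \frac{r_{\rm max}}{1-\gamma}$ and converting the $L_1$ difference of kernels into $2D_{\rm TV}$, which reproduces the constant $\frac{2\gamma r_{\rm max}}{(1-\gamma)^2}$ exactly as in Appendix~\ref{apd:proof}.
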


The proof of Theorem~\ref{thm:dynamics_bound} is given in Appendix~\ref{apd:proof}. We observe that the derived performance bound in (\ref{thm:41.1}) is controlled by the dynamics discrepancy term (a). Intuitively, the performance difference would be minor when the dynamics discrepancy between the two domains is negligible. DARC~\cite{eysenbach2020off} applies the Pinsker's inequality~\cite{csiszar2011information} and derives the following form:
\begin{align}
    \eta_{\mathcal{M}_{tar}}(\pi) &\geq \eta_{\mathcal{M}_{src}}(\pi)- \dfrac{\gamma r_{\rm max}}{(1-\gamma)^2}\cdot \sqrt{2\mathbb{E}_{\rho_{src}^\pi}\left[D_{\rm KL}\left(P_{src}(\cdot|s,a)\| P_{tar}(\cdot|s,a)\right)\right]}\nonumber\\
    &= \eta_{\mathcal{M}_{src}}(\pi) + \dfrac{\gamma r_{\rm max}}{(1-\gamma)^2}\cdot \sqrt{2\mathbb{E}_{\rho_{src}^\pi, P_{src}}\left[\log\left(P_{tar}(s'|s,a)/P_{src}(s'|s,a)\right)\right]}. \label{darc:dynamics_performance_bound}
\end{align}
Based on the result in (\ref{darc:dynamics_performance_bound}), DARC optimizes the policy by converting the second term in RHS to a reward correction~(\textit{i.e.}, $\Delta r:= \log(P_{tar}(s'|s, a)/P_{src}(s'|s, a))$), leading to the dynamics discrepancy-based adaptation. However, given the transition from the source domain~(\textit{i.e.},~$P_{src}(s'|s, a)\approx 1$), the reward correction will lead to significant penalty (\textit{i.e.},~$\log(P_{tar}(s'|s, a)/P_{src}(s'|s, a))\ll 0$) if the likelihood estimation of the transition concerning the target domain is low~(\textit{i.e.},~$P_{tar}(s'|s, a)\approx 0$). Consequently, the value estimation of the transition with dynamics mismatch tends to be overly pessimistic as shown in Figure~\ref{fig:toy_exp}~(c), which hinders learning an effective policy concerning the target domain.

Instead of myopically considering the single-step dynamics mismatch, we claim that the transitions with significant dynamics mismatch can be equivalent concerning the value estimations that evaluate the long-term behaviors. Due to the dynamics shift across domains, a state-action pair~(\textit{i.e.},~$(s, a)$) would lead to two different next-states~(\textit{i.e.},~$s'_{src},~s'_{tar}$), the paired transitions are nearly equivalent for temporal different learning if the induced value estimations are close~(\textit{i.e.},~$|V(s'_{src})-V(s'_{tar})|\leq \epsilon$). 
Motivated by this, we derive a performance guarantee from the value difference perspective.
\begin{theorem}
    \label{thm:value_bound_easy}
    \rm{\textbf{(Performance bound controlled by value difference.)}}
    \textit{
    Denote source domain and target domain as $\mathcal{M}_{src}$ and $\mathcal{M}_{tar}$, respectively. We have the performance guarantee of any policy $\pi$ over the two MDPs:
    \begin{equation}
        \eta_{\mathcal{M}_{tar}}(\pi)\geq\eta_{\mathcal{M}_{src}}(\pi) - \dfrac{\gamma}{1-\gamma}\cdot \underbrace{\mathbb{E}_{\rho_{\mathcal{M}_{src}}^{\pi}}\biggl[\biggl|\mathbb{E}_{P_{src}} \left[V^{\pi}_{\mathcal{M}_{tar}}(s')\right]-\mathbb{E}_{P_{tar}} \left[V^{\pi}_{\mathcal{M}_{tar}}(s')\right]\biggr|\biggr]}_{\text{(a): value difference}}\label{thm:42.1}.
    \end{equation}
    }
\end{theorem}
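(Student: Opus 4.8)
The plan is to prove a sharper \emph{exact} identity and then weaken it to the displayed inequality. First I would put the objective into value-function form: with $V^{\pi}_{\mathcal{M}}(s):=\mathbb{E}_{\mathcal{M},\pi}\bigl[\sum_{t\ge 0}\gamma^{t}r(s_{t},a_{t})\mid s_{0}=s\bigr]$, unrolling the definitions of $\rho^{\pi}_{\mathcal{M}}$ and $V^{\pi}_{\mathcal{M}}$ gives $\eta_{\mathcal{M}}(\pi)=(1-\gamma)\,\mathbb{E}_{s_{0}\sim\rho_{0}}[V^{\pi}_{\mathcal{M}}(s_{0})]$ for both $\mathcal{M}\in\{\mathcal{M}_{src},\mathcal{M}_{tar}\}$, so it suffices to control $\mathbb{E}_{s_{0}\sim\rho_{0}}[\delta(s_{0})]$ where $\delta(s):=V^{\pi}_{\mathcal{M}_{tar}}(s)-V^{\pi}_{\mathcal{M}_{src}}(s)$. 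Subtracting the two Bellman equations causes the shared instantaneous-reward term to cancel, and after adding and subtracting the hybrid term $\gamma\,\mathbb{E}_{a\sim\pi(\cdot\mid s),\,s'\sim P_{src}(\cdot\mid s,a)}[V^{\pi}_{\mathcal{M}_{tar}}(s')]$ I obtain $\delta(s)=\gamma\,g(s)+\gamma\,\mathbb{E}_{a\sim\pi(\cdot\mid s),\,s'\sim P_{src}(\cdot\mid s,a)}[\delta(s')]$, where $g(s):=\mathbb{E}_{a\sim\pi(\cdot\mid s)}\bigl[\mathbb{E}_{P_{tar}}V^{\pi}_{\mathcal{M}_{tar}}(s')-\mathbb{E}_{P_{src}}V^{\pi}_{\mathcal{M}_{tar}}(s')\bigr]$ is the one-step value-discrepancy signal.

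Next I would read this as a discounted Bellman fixed point for $\delta$ under the \emph{source} dynamics with ``reward'' $g$. Since $r\in[0,r_{\rm max}]$ and $\gamma<1$, $\delta$ is bounded, so iterating the recursion and sending the residual $\gamma^{K}(P^{\pi}_{src})^{K}\delta$ to zero yields $\delta(s_{0})=\sum_{t\ge 0}\gamma^{t+1}\,\mathbb{E}_{\mathcal{M}_{src},\pi}[g(s_{t})\mid s_{0}]$. Averaging over $s_{0}\sim\rho_{0}$, applying $\eta_{\mathcal{M}}(\pi)=(1-\gamma)\mathbb{E}_{\rho_{0}}[V^{\pi}_{\mathcal{M}}]$ from the first step, and folding the $\gamma^{t}$-weighted source visitation frequencies into $\nu^{\pi}_{\mathcal{M}_{src}}$ and then $\rho^{\pi}_{\mathcal{M}_{src}}$ gives the exact identity
\[
\eta_{\mathcal{M}_{tar}}(\pi)-\eta_{\mathcal{M}_{src}}(\pi)=\gamma\,\mathbb{E}_{(s,a)\sim\rho^{\pi}_{\mathcal{M}_{src}}}\Bigl[\mathbb{E}_{P_{tar}}V^{\pi}_{\mathcal{M}_{tar}}(s')-\mathbb{E}_{P_{src}}V^{\pi}_{\mathcal{M}_{tar}}(s')\Bigr].
\]
(If instead $V^{\pi}$ is taken to be the $(1-\gamma)$-normalized value, the same computation produces the constant $\tfrac{\gamma}{1-\gamma}$ directly.) To finish, I would lower-bound the right-hand side by $-\gamma\,\bigl|\mathbb{E}_{\rho^{\pi}_{\mathcal{M}_{src}}}[\cdots]\bigr|$, move the absolute value inside the expectation by Jensen's inequality, and use $|\mathbb{E}_{P_{tar}}V-\mathbb{E}_{P_{src}}V|=|\mathbb{E}_{P_{src}}V-\mathbb{E}_{P_{tar}}V|$ to recognize exactly term (a) of the statement; since $\gamma\le\tfrac{\gamma}{1-\gamma}$ for $\gamma\in[0,1)$, the displayed bound follows a fortiori.

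The part I expect to be the main obstacle is the bookkeeping in the second step: confirming that the unrolled recursion actually converges to $\delta$ (this is exactly where boundedness of $V^{\pi}$ enters) and that re-summing the $\gamma^{t+1}$-discounted source-domain state visitations reproduces precisely the normalized occupancy $\rho^{\pi}_{\mathcal{M}_{src}}$ with the correct leading constant rather than an off-by-one variant. Conceptually it is also worth emphasizing why this argument is genuinely different from Theorem~\ref{thm:dynamics_bound}: one cannot recover the value-difference bound from the total-variation bound, because $D_{\rm TV}(P_{src}\|P_{tar})$ can be large while $g(s)\equiv 0$ whenever the mismatched next-states share the same target-domain value; the telescoping identity above is precisely what captures this cancellation.
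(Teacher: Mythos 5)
Your proof is correct, and at its core it establishes the same exact identity that the paper's argument rests on, but by a different derivation. The paper invokes the Telescoping Lemma (Lemma~\ref{lemma:telescoping_apd}, adapted from Luo et al.), which builds hybrid returns $W_j$ that follow source dynamics for $j$ steps and target dynamics thereafter, and telescopes $\sum_j (W_{j+1}-W_j)$; you instead subtract the two Bellman equations for $V^\pi_{\mathcal{M}_{tar}}$ and $V^\pi_{\mathcal{M}_{src}}$, recognize $\delta = V^\pi_{\mathcal{M}_{tar}}-V^\pi_{\mathcal{M}_{src}}$ as the fixed point of a discounted recursion under the source dynamics with ``reward'' $g$, and unroll it (the boundedness of $\delta$ by $2r_{\rm max}/(1-\gamma)$ justifies discarding the residual, exactly as you anticipate). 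The two routes yield the same simulation-lemma-type identity, after which both proofs finish identically by taking absolute values inside the expectation. One genuine payoff of your bookkeeping is the constant: under the paper's stated normalized definition $\eta_\mathcal{M}(\pi)=\mathbb{E}_{\rho^\pi_\mathcal{M}}[r(s,a)]$ you correctly obtain the sharper factor $\gamma$, whereas the paper's $\tfrac{\gamma}{1-\gamma}$ matches the unnormalized-return convention implicitly used inside Lemma~\ref{lemma:telescoping_apd} (where $W_\infty$ is identified with $\eta$); since $\gamma\le\tfrac{\gamma}{1-\gamma}$, your a fortiori step recovers the stated bound in either convention, and your remark usefully flags this normalization inconsistency. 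Your closing observation — that the value-difference bound cannot be recovered from the total-variation bound of Theorem~\ref{thm:dynamics_bound} because mismatched next states with equal target-domain values make $g\equiv 0$ — is exactly the motivation the paper gives for the value-consistency perspective.
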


The proof of Theorem~\ref{thm:value_bound_easy} is given in Appendix~\ref{apd:proof}. The value difference term provides a novel perspective: \textit{the performance can be guaranteed if the transitions from the source domain lead to consistent value targets in the target domain}. The result further highlights the \textit{value consistency} perspective for the online dynamics adaptation problem.

\section{Value-Guided Data Filtering}
\label{method}

In this section, we propose Value-Guided Data Filtering~(VGDF), a simple yet efficient algorithm for online domain adaptation via selective data sharing. Then we introduce the setting with offline source domain data and a variant of VGDF based on novel theoretical results. The pseudocodes are shown in Appendix~\ref{apd:pseudocode}, and the illustration of VGDF is shown in Figure~\ref{fig:semantic}.

\subsection{Dynamics Adaptation by Selective Data Sharing}
\label{method:fvp}

Inspired by the performance bound proposed in Theorem~\ref{thm:value_bound_easy}, we can guarantee the policy performance by controlling the value difference term in (\ref{thm:42.1}). As discussed in Section~\ref{sec:motivation:value_difference}, the paired transitions concerning two domains, induced by the same state-action pair, can be regarded as equivalent for temporal difference learning when the corresponding values are close. Thus, we propose to select source domain transitions with minor value discrepancies for dynamics adaptation.

To select rational transitions from the source domain, we need to compare the value differences of paired transitions based on the same source domain state-action pair $(s_{src}, a_{src})$. Formally, given a state-action pair $(s_{src}, a_{src})$ from the source domain, our objective is to estimate whether the value-difference between $s'_{tar}$ and $s'_{src}$ is sufficiently small, \textit{i.e.},
\begin{equation}
    \Delta(s_{src},a_{src}) := \mathds{1}\left({\big|V^{\pi}_{\mathcal{M}_{tar}}(s_{tar}') - V^{\pi}_{\mathcal{M}_{tar}}(s_{src}')\big| \leq \epsilon}\right),
    \label{value_diff_formal}
\end{equation}
where $s_{tar}'\sim P_{tar}(\cdot|s_{src}, a_{src}),~s_{src}'\sim P_{src}(\cdot|s_{src}, a_{src})$, $\mathds{1}$ denotes the indicator function and $\epsilon$ can be a predefined threshold.

To obtain $\Delta(s_{src}, a_{src})$, we need to perform policy evaluation over the states to obtain the value estimations given the paired next states~(\textit{i.e.},~$s'_{src}, s'_{tar}$), as formulated in Eq.~(\ref{value_diff_formal}). Monte Carlo~(MC) evaluation can provide unbiased values by rolling the policy starting from specific states~\cite{sutton2018reinforcement}. However, since the environment is not manipulable, we cannot perform MC evaluation from arbitrary states. Thus, we propose to use an estimated value function for policy evaluation. In this work, we adopt the Fitted Q Evaluation~(FQE)~\cite{munos2008finite} that is widely used in off-policy RL algorithms~\cite{lillicrap2016continuous,fujimoto2018addressing,haarnoja2018soft}. Specifically, we utilize a learned Q function $Q_{\theta}: S\times A\rightarrow \mathbb{R}$ for evaluation. 

Furthermore, one problem is that the corresponding target domain next state $s'_{tar}$ induced by $(s_{src}, a_{src})$ is unavailable in practice. To achieve this, we train a dynamics model with the collected data from the target domain. Following prior works~\cite{kurutach2018model,chua2018deep}, we employ an ensemble of Gaussian dynamics models~$\{T_{\phi_i}(s'|s, a)\}_{i=1}^M$, in an attempt to capture the epistemic uncertainty due to the insufficient target domain samples. Given the source domain state-action pair $(s_{src}, a_{src})$, we generate an ensemble of fictitious states and obtain the corresponding values for each state-action pair, which we call fictitious value ensemble~(FVE)~$\mathcal{Q}^{\pi}_{tar}(s_{src}, a_{src})$:
\begin{equation}
    \mathcal{Q}^{\pi}_{tar}(s_{src},a_{src}):=\left\{Q_{\theta}(s_{i}',a_{i}')|_{s_{i}'\sim T_{\phi_i}(\cdot|s_{src},a_{src}), a_{i}'\sim \pi(\cdot|s_{i}')}\right\}_{i=1}^M.\label{fve}
\end{equation}

In practice, the choice of $\epsilon$ in Eq.~(\ref{value_diff_formal}) is also nontrivial due to task-specific scales of the values and the non-stationary value function during training. We replace the absolute value difference with the likelihood estimation to address the problem. Specifically, we construct a Gaussian distribution with the mean and variance of FVE denoted as $\mathcal{N}(\mathrm{Mean}(\mathcal{Q}^{\pi}_{tar}(s_{src}, a_{src})),\mathrm{Var}(\mathcal{Q}^{\pi}_{tar}(s_{src}, a_{src})))$. Estimating the value of the source domain state as $V_{tar}^\pi(s'_{src}):= Q_\theta(s_{src}', a_{src}')|_{ a_{src}'\sim\pi(\cdot|s_{src}')}$, we introduce Fictitious Value Proximity~(FVP) representing the likelihood of the source domain state value in the distribution:
\begin{equation}
    \Lambda(s_{src}, a_{src}, s'_{src}):= \mathbb{P}(V_{tar}^\pi(s_{src}')~|~\mathrm{Mean}(\mathcal{Q}^{\pi}_{tar}(s_{src}, a_{src})),\mathrm{Var}(\mathcal{Q}^{\pi}_{tar}(s_{src}, a_{src}))).
    \label{fvp_likelihood}
\end{equation}

Based on the likelihood estimation, we utilize the rejection sampling to select fixed percentage data (\textit{i.e.}, $25\%$) with the highest likelihood from a batch of source domain transitions at each training iteration. Specifically, we train the value function by optimizing the following objective:
\begin{align}
    &\theta \leftarrow \underset{\theta}{\arg\min}~\dfrac{1}{2}~\mathbb{E}_{(s,a,r,s')\sim D_{tar}}\left[\left(Q_\theta - \mathcal{T}Q_{\theta}\right)^2\right] + \dfrac{1}{2}~\mathbb{E}_{(s,a,r,s')\sim D_{src}}\left[\omega(s,a,s')\left(Q_\theta - \mathcal{T}Q_{\theta}\right)^2\right],\label{critic_update}\nonumber\\
    &\text{where}\qquad\qquad\qquad\qquad\qquad\qquad\omega(s,a,s') := \mathds{1}\left({\Lambda(s,a,s') > \Lambda_{\xi\%}}\right).
\end{align}
$\Lambda_{\xi\%}$ is the top $\xi$-quantile likelihood estimation of the minibatch sampled from source domain data, $\mathcal{T}$ represents the Bellman operator, and $D_{src}, D_{tar}$ denote replay buffers of two domains.

Consider the case when the agent can perform online interactions with the source domain, the training data mostly comes from the source domain, while we aim to train a policy for the target domain. Hence, exploring the source domain is essential to collect transitions that might be high-value concerning the target domain. Thus, we introduce an exploration policy $\pi^\mathrm{E}$ that maximizes the approximate upper confidence bound of the $Q$-value, \textit{i.e.}, $\pi^\mathrm{E}\leftarrow\arg\max_{\pi^\mathrm{E}}\mathbb{E}_{s\sim D_{tar}\cup D_{src}}\left[Q_{\mathrm{UB}}(s,a)|_{a\sim \pi^\mathrm{E}(\cdot|s)} \right]$,
where $Q_{\mathrm{UB}}(s,a) := \max\left\{Q_{\theta_i}(s,a)\right\}_{i=1}^2$ under the implementation with SAC~\cite{haarnoja2018soft} backbone. Importantly, the exploration policy $\pi^\mathrm{E}$ is separate from the main policy $\pi$ learned via vanilla SAC. $\pi^\mathrm{E}$ and $\pi$ are used for data collection in the source domain and target domain, respectively. The optimistic data collection technique has been proposed for advanced exploration~\cite{ciosek2019better} while we utilize the technique in online dynamics adaptation setting.

\begin{figure}[t]
    \centering
    \includegraphics[width=0.96\textwidth]{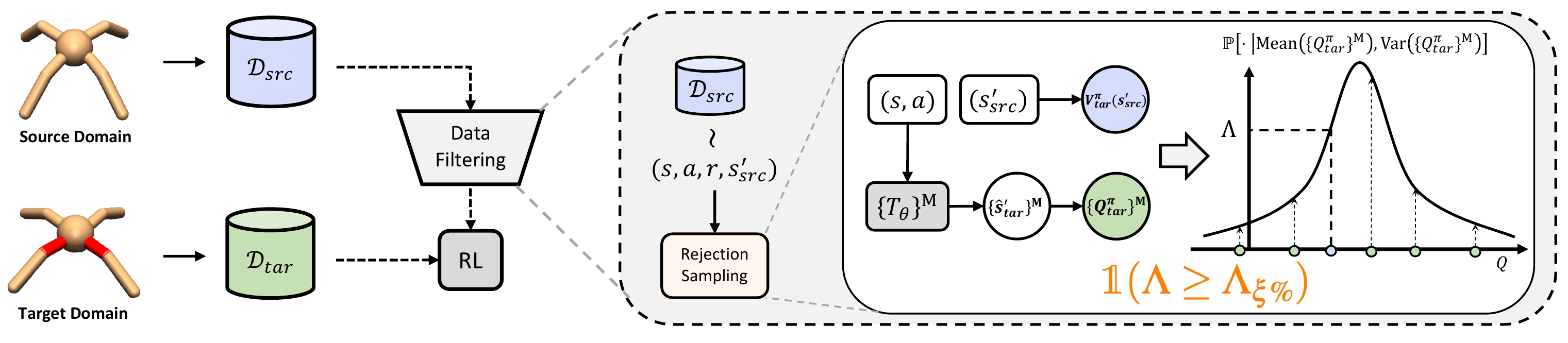}
    \caption{
    Semantic illustration of VGDF. We tackle online dynamics adaptation by selectively sharing the source domain data, and the RL denotes any off-the-shelf off-policy RL algorithm. 
    }
    \label{fig:semantic}
\end{figure}

\subsection{Adaptation with Offline Dataset of Source Domain}
\label{method:offline}

So far, we have discussed the setting where the agent can interact with the source domain to collect data actively. Nonetheless, simultaneous online access to the source and target domain might sometimes be impractical. In order to address the limitation, we aim to extend our method to the setting we refer to as \textit{Offline Source with Online Target}, in which the agent can access a source domain offline dataset and a relatively small number of online interactions with the target domain. 

To adapt VGDF to such a setting, we propose a novel theoretical result of the performance guarantee:
\begin{theorem}
    \label{thm:value_bound}
    \textit{
    Under the setting with offline source domain dataset $D$ whose empirical estimation of the data collection policy is $\pi_D(a|s):=\frac{\sum_{D}\mathds{1}(s, a)}{\sum_{D}\mathds{1}(s)}$, let $\mathcal{M}_{src}$ and $\mathcal{M}_{tar}$ denote the source and target domain, respectively. We have the performance guarantee of any policy $\pi$ over the two MDPs:
    \begin{align}
        \eta_{\mathcal{M}_{tar}}(\pi)\geq\eta_{\mathcal{M}_{src}}(\pi)
        - \dfrac{4 r_{\rm max}}{(1-\gamma)^2} \underbrace{\mathbb{E}_{\rho_{\mathcal{M}_{src}}^{\pi_D}, P_{src}}\left[D_{TV}(\pi_{D} || \pi)\right]}_{\text{(a): policy regularization}}
        - \dfrac{1}{1-\gamma}\underbrace{\mathbb{E}_{\rho_{\mathcal{M}_{src}}^{\pi_D}}\Big[\Big|\zeta(s,a)\Big|\Big]}_{\text{(b): value difference}}\label{thm:51.1},
    \end{align}
    where $\zeta(s,a) := \mathbb{E}_{P_{src},\pi}\left[Q_{\mathcal{M}_{tar}}^\pi(s',a')\right] - \mathbb{E}_{P_{tar},\pi}\left[Q_{\mathcal{M}_{tar}}^\pi(s',a')\right]$.
    }
\end{theorem}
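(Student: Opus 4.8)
The plan is to derive~(\ref{thm:51.1}) from Theorem~\ref{thm:value_bound_easy} by moving the distribution under which the value difference is evaluated from $\rho^{\pi}_{\mathcal{M}_{src}}$ to $\rho^{\pi_D}_{\mathcal{M}_{src}}$, absorbing the cost of this change of measure into a policy-regularization term. First, because $\mathbb{E}_{a'\sim\pi(\cdot|s')}[Q^{\pi}_{\mathcal{M}_{tar}}(s',a')]=V^{\pi}_{\mathcal{M}_{tar}}(s')$, the quantity $|\zeta(s,a)|$ is precisely the integrand appearing in~(\ref{thm:42.1}); hence Theorem~\ref{thm:value_bound_easy} already gives $\eta_{\mathcal{M}_{tar}}(\pi)\ge\eta_{\mathcal{M}_{src}}(\pi)-\frac{\gamma}{1-\gamma}\,\mathbb{E}_{\rho^{\pi}_{\mathcal{M}_{src}}}[\,|\zeta(s,a)|\,]$, and it remains to upper bound $\frac{\gamma}{1-\gamma}\,\mathbb{E}_{\rho^{\pi}_{\mathcal{M}_{src}}}[|\zeta|]$ by the right-hand side of~(\ref{thm:51.1}).

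I would begin with the change-of-measure split $\mathbb{E}_{\rho^{\pi}_{\mathcal{M}_{src}}}[|\zeta|]\le\mathbb{E}_{\rho^{\pi_D}_{\mathcal{M}_{src}}}[|\zeta|]+\|\zeta\|_{\infty}\,\|\rho^{\pi}_{\mathcal{M}_{src}}-\rho^{\pi_D}_{\mathcal{M}_{src}}\|_{1}$, using $\|\zeta\|_{\infty}\le 2r_{\rm max}$ (each of the two terms defining $\zeta$ is an expectation of $Q^{\pi}_{\mathcal{M}_{tar}}$, which takes values in $[0,r_{\rm max}]$ under the normalization for which $\eta_{\mathcal{M}}(\pi)=\mathbb{E}_{\rho_0}[V^{\pi}_{\mathcal{M}}]$). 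The only nontrivial ingredient left is the occupancy-measure perturbation bound
\begin{equation}
\big\|\rho^{\pi}_{\mathcal{M}_{src}}-\rho^{\pi_D}_{\mathcal{M}_{src}}\big\|_{1}\ \le\ \frac{2}{1-\gamma}\,\mathbb{E}_{(s,a)\sim\rho^{\pi_D}_{\mathcal{M}_{src}}}\big[D_{TV}\big(\pi_D(\cdot|s)\,\|\,\pi(\cdot|s)\big)\big].
\end{equation}

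To establish it, let $d^{\pi}$ denote the discounted state occupancy and $P^{\pi}(s'|s):=\sum_a\pi(a|s)P_{src}(s'|s,a)$ the induced state kernel in $\mathcal{M}_{src}$. From $d^{\pi}=(1-\gamma)\rho_0(I-\gamma P^{\pi})^{-1}$ and the resolvent identity one obtains $d^{\pi}-d^{\pi_D}=\gamma\, d^{\pi_D}(P^{\pi}-P^{\pi_D})(I-\gamma P^{\pi})^{-1}$; since $\|(I-\gamma P^{\pi})^{-1}\|_{1\to 1}\le\frac{1}{1-\gamma}$ and $\|d^{\pi_D}(P^{\pi}-P^{\pi_D})\|_{1}\le 2\,\mathbb{E}_{s\sim d^{\pi_D}}[D_{TV}(\pi_D\|\pi)]$, this gives $\|d^{\pi}-d^{\pi_D}\|_{1}\le\frac{2\gamma}{1-\gamma}\mathbb{E}_{s\sim d^{\pi_D}}[D_{TV}(\pi_D\|\pi)]$, and adding the action-conditional mismatch $\|\rho^{\pi}-\rho^{\pi_D}\|_{1}\le\|d^{\pi}-d^{\pi_D}\|_{1}+2\,\mathbb{E}_{s\sim d^{\pi_D}}[D_{TV}(\pi_D\|\pi)]$, followed by $\frac{2\gamma}{1-\gamma}+2=\frac{2}{1-\gamma}$, closes it. (Here $\mathbb{E}_{s\sim d^{\pi_D}_{\mathcal{M}_{src}}}[D_{TV}(\pi_D\|\pi)]=\mathbb{E}_{(s,a)\sim\rho^{\pi_D}_{\mathcal{M}_{src}}}[D_{TV}(\pi_D\|\pi)]$, which is the policy-regularization term of~(\ref{thm:51.1}).)

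Assembling, $\frac{\gamma}{1-\gamma}\mathbb{E}_{\rho^{\pi}_{\mathcal{M}_{src}}}[|\zeta|]\le\frac{\gamma}{1-\gamma}\mathbb{E}_{\rho^{\pi_D}_{\mathcal{M}_{src}}}[|\zeta|]+\frac{4\gamma r_{\rm max}}{(1-\gamma)^2}\,\mathbb{E}_{\rho^{\pi_D}_{\mathcal{M}_{src}}}[D_{TV}(\pi_D\|\pi)]$, and bounding $\gamma\le 1$ in each term recovers exactly the coefficients $\frac{1}{1-\gamma}$ and $\frac{4r_{\rm max}}{(1-\gamma)^2}$ in~(\ref{thm:51.1}). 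I expect the occupancy-measure perturbation bound to be the main obstacle: it is the only step that is not routine bookkeeping, and it is the source of the extra $1/(1-\gamma)$ factor compared with Theorem~\ref{thm:value_bound_easy}, since a pointwise perturbation of size $\epsilon$ in the policy can shift the discounted occupancy measure by order $\epsilon/(1-\gamma)$.
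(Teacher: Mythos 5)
Your high-level route is genuinely different from the paper's: you start from Theorem~\ref{thm:value_bound_easy} (whose integrand is indeed $|\zeta(s,a)|$) and then move the outer expectation from $\rho^{\pi}_{\mathcal{M}_{src}}$ to $\rho^{\pi_D}_{\mathcal{M}_{src}}$ by an occupancy change of measure; your perturbation bound $\|\rho^{\pi}_{\mathcal{M}_{src}}-\rho^{\pi_D}_{\mathcal{M}_{src}}\|_{1}\le\frac{2}{1-\gamma}\mathbb{E}_{s\sim d^{\pi_D}}[D_{TV}(\pi_D\|\pi)]$ is a correct standard lemma. The genuine gap is the step $\|\zeta\|_\infty\le 2r_{\rm max}$. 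The $Q^{\pi}_{\mathcal{M}_{tar}}$ in $\zeta$ is the ordinary action-value function with range $[0,r_{\rm max}/(1-\gamma)]$ --- exactly the bound the paper's own proof of this theorem uses, and the convention under which the stated coefficients $\frac{4r_{\rm max}}{(1-\gamma)^2}$ and $\frac{1}{1-\gamma}$ arise. With the correct $\|\zeta\|_\infty\le\frac{r_{\rm max}}{1-\gamma}$, your assembly yields a policy-regularization coefficient of order $\frac{2\gamma r_{\rm max}}{(1-\gamma)^3}$, strictly weaker than the claimed $\frac{4r_{\rm max}}{(1-\gamma)^2}$ whenever $\gamma>2/3$; rescaling $V,Q$ by $(1-\gamma)$ so that $Q\in[0,r_{\rm max}]$ only shifts the discrepancy elsewhere, because in any single consistent normalization your route pays (value scale)$\times$(occupancy shift)$\times$(the $\frac{\gamma}{1-\gamma}$ prefactor), i.e.\ one extra factor of $\frac{1}{1-\gamma}$ compared with what the paper's argument delivers. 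This loss is intrinsic to the change-of-measure approach. The paper never changes measure: it decomposes $\eta_{\mathcal{M}_{tar}}(\pi)-\eta_{\mathcal{M}_{src}}(\pi)$ through $\eta_{\mathcal{M}_{src}}(\pi_D)$, applies the two-policy/two-dynamics telescoping identity (Lemma~\ref{lemma:1_apd}) so that both resulting terms are already expectations under $\rho^{\pi_D}_{\mathcal{M}_{src}}$, and lets the policy mismatch enter only pointwise through $\sum_{a'}|\pi_D-\pi|\,Q$ (term (a) of its proof and Lemma~\ref{lemma:2_apd}), which costs a single factor $\frac{r_{\rm max}}{1-\gamma}$ and hence $(1-\gamma)^{-2}$ overall.

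A secondary inaccuracy: your regularization term comes out as $\mathbb{E}_{s\sim d^{\pi_D}_{\mathcal{M}_{src}}}[D_{TV}(\pi_D(\cdot|s)\|\pi(\cdot|s))]$, evaluated at current states, and you assert it equals the theorem's $\mathbb{E}_{\rho^{\pi_D}_{\mathcal{M}_{src}},P_{src}}[D_{TV}(\pi_D\|\pi)]$, which is evaluated at next states $s'\sim P_{src}(\cdot|s,a)$. They are not equal: $d^{\pi_D}=(1-\gamma)\rho_0+\gamma\,(\text{next-state law under }\rho^{\pi_D},P_{src})$, so your quantity carries an initial-state contribution that the theorem's term does not dominate in general. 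This is minor relative to the horizon-factor issue, but even after repairing the constants your inequality would not literally be the one stated.
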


The proof of Theorem~\ref{thm:value_bound} is given in Appendix~\ref{apd:proof}. This theorem highlights the importance of policy regularization and value difference for achieving desirable performance. It is worth noting that the policy regularization term can shed light on the impact of behavior cloning, which has been proven effective for offline RL~\cite{fujimoto2021minimalist}. Additionally, the value difference term has a similar structure to that of Theorem~\ref{thm:42.1}. Thus, we propose a variant called \textit{VGDF + BC} that combines behavior cloning loss with the original selective data sharing scheme.
The pseudocode is shown in Algorithm~\ref{alg:pseudocode_offline}, Appendix~\ref{apd:pseudocode}.

\section{Experiments}

In this section, we present empirical investigations of our approach. We examine the effectiveness of our method in scenarios with various dynamics shifts, including kinematic change and morphology change. Furthermore, we provide ablation studies and qualitative analysis of our method. Details of environment settings and the implementation are shown in Appendix~\ref{apd:env_setting} and Appendix~\ref{apd:implementation}, respectively. 
Additional results are in Appendix~\ref{apd:additional_results}.

\subsection{Adaptation Performance Evaluation}
\label{exp:main_exp}

To systematically investigate the adaptation performance of the methods, we construct two types of dynamics shift scenarios, including kinematic shift and morphology shifts, for four environments (\textit{HalfCheetah}, \textit{Ant}, \textit{Walker}, \textit{Hopper}) from Gym Mujoco~\cite{todorov2012mujoco,brockman2016openai}. We use the original environment as the source domain across all experiments. To simulate kinematic shifts, we limit the rotation angle range of specific joints to simulate the broken joint scenario. As for morphology shifts, we modify the size of specific limbs while the number of limbs keeps unchanged to ensure the state/action space consistent across domains. Full details of the environment settings are deferred to Appendix~\ref{apd:env_setting}.

We compare our algorithm with four baselines: (\romannumeral1)~\textit{DARC}~\cite{eysenbach2020off} trains the domain classifiers to compensate the agent with an extra reward for seeking dynamics-consistent behaviors; (\romannumeral2) \textit{GARAT}~\cite{desai2020imitation} trains the policy with an adversarial imitation reward in the grounded source domain via action transformation~\cite{hanna2017grounded}; (\romannumeral3) \textit{IW Clip}~(Importance Weighting Clip) performs importance-weighted bellman updates for source domain samples. The importance weights (\textit{i.e.}, $P_{tar}(s'|s, a)/P_{src}(s'|s, a)$) are approximated by the domain classifiers proposed in DARC, and we clip the weight to $[10^{-4},1]$ to stabilize training; (\romannumeral4) \textit{Finetune} uses the $10^5$ target domain transitions to finetune the policy trained in the source domain with $1\mathrm{M}$ samples. Furthermore, \textit{Zero-shot} shows the performance of directly transferring the learned policy in the source domain to the target domain, and \textit{Oracle} demonstrates the performance of the policy trained in the target domain from scratch with $1\mathrm{M}$ transitions. We run all algorithms with the same five random seeds. The implementation details are given in Appendix~\ref{apd:implementation:details}. 

As the results in Figure~\ref{fig:lc} show, our method outperforms \textit{GARAT} and \textit{IW Clip} in all environments. \textit{DARC} demonstrates competitive performance only in the first two environments, while it does not work in other environments. We believe that the assumption of \textit{DARC} does not hold in the failure cases due to the significant dynamics mismatch. \textit{GARAT} fails in almost all environments, which we believe is caused by the impractical action transformation from inaccurate dynamics models. The performance of \textit{Zero-shot} suggests that the policies trained in the source domains barely work in the target domains due to dynamics mismatch. \textit{Finetune} achieves promising results and outperforms our method in two of eight environments. We believe that the temporally-extended behaviors of the pre-trained policy benefit learning in the downstream tasks with the assistance of efficient exploration. Nonetheless, our method is the only one that outperforms or matches the asymptotic performance of \textit{Oracle} in four out of eight environments. 

\begin{figure}[t]
    \centering
    \includegraphics[width=\textwidth]{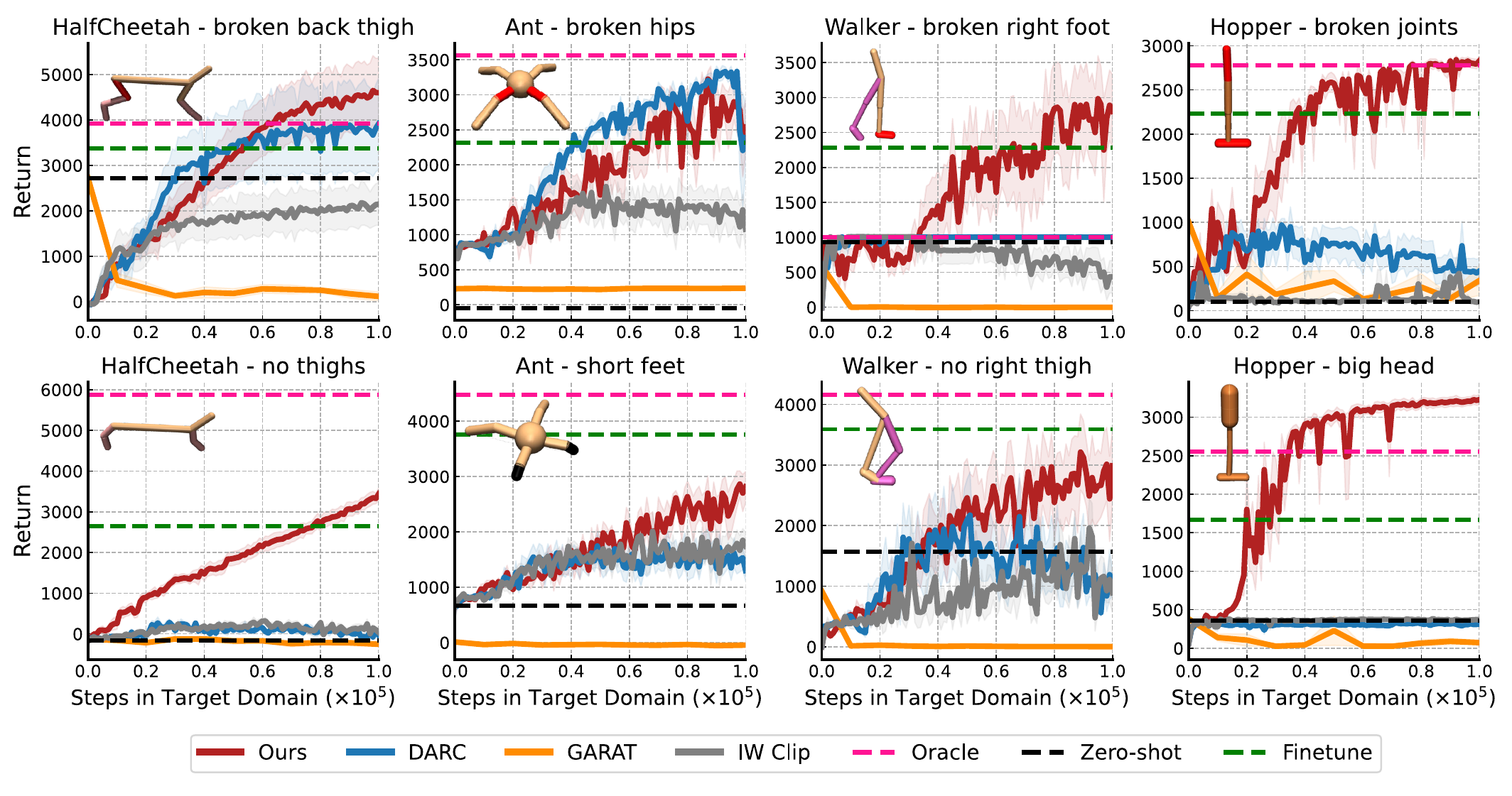}
    \caption{Adaptation performance in the target domain with kinematic mismatch~(\textit{Top}) or morphology mismatch~(\textit{Bottom}). Solid curves are average returns over five runs with different random seeds, and shaded areas indicate one standard deviation. We use data ratio $\Gamma=10$, which indicates all algorithms perform $10^6$ online interactions with the source domain except \textit{Oracle}. 
    }
    \label{fig:lc}
\end{figure}

\subsection{Ablation Studies}
To investigate the impact of design components in our method, we perform ablation analysis on the ratio of transitions $\Gamma$, data selection ratio $\xi\%$, and the optimistic exploration. 

\begin{wrapfigure}{r}{0.45\textwidth} 
    \centering
    \vspace{-1.4em}
    \includegraphics[width=0.45\textwidth]{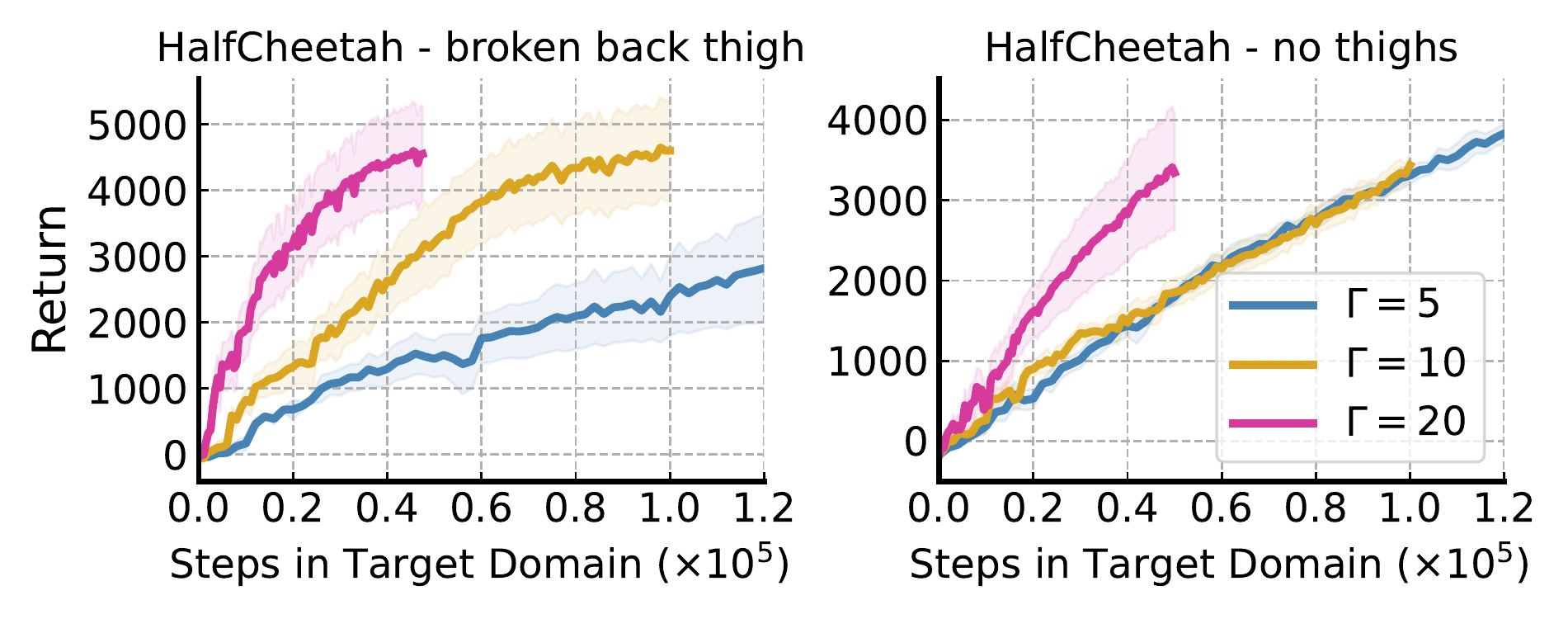}
    \vspace{-1.6em}
    \captionof{figure}{Effect of transition ratio $\Gamma$.}
    \vspace{-1em}
    \label{fig:ablation_transition_ratios}
\end{wrapfigure}
\textbf{Data ratio $\Gamma$.}~
We employ different ratios of transitions from the source domain versus those from the target domain~($\Gamma=5,10,20$) for variants of our algorithm. The results shown in Figure~\ref{fig:ablation_transition_ratios} demonstrate that the performance of our algorithm improves with more source domain transitions when the number of target domain transitions is the same. This finding indicates that VGDF can fully exploit the reusable source domain transitions to enhance the training efficiency concerning the target domain.

\begin{figure}[h]
    \centering
    \vspace{0.25em}
    \begin{minipage}[h]{.47\textwidth}
        \centering
        \includegraphics[width=\textwidth]{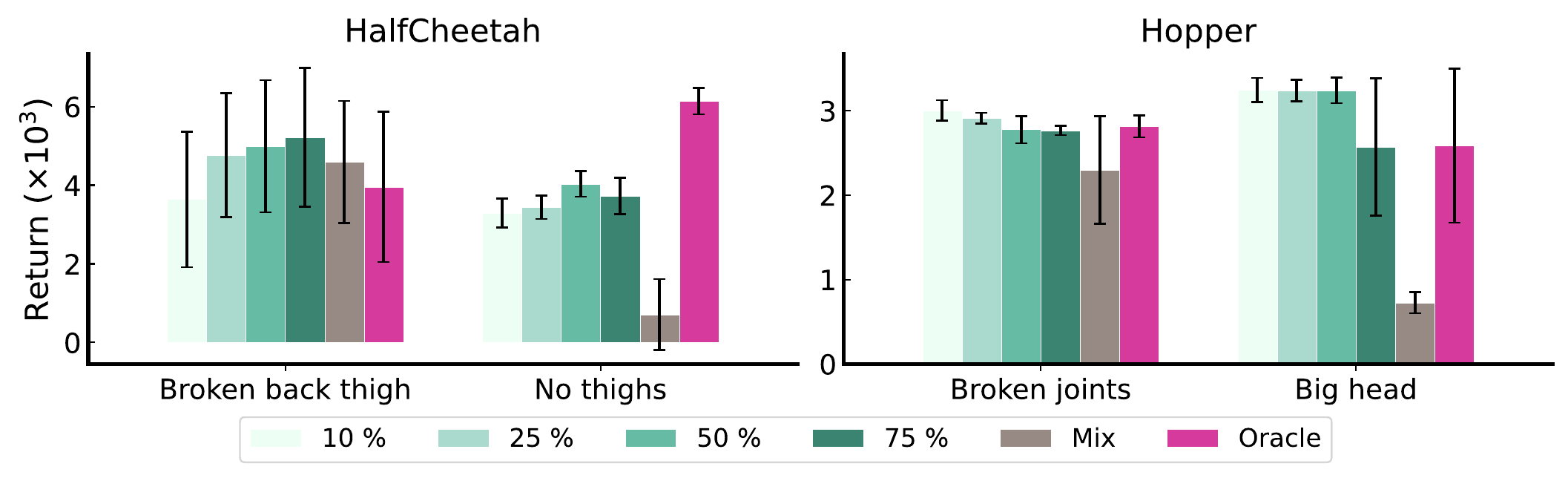}
        \captionof{figure}{Effect of data selection ratios $\xi\%$.}
        \label{fig:ablation_data_ratio}
    \end{minipage}
    \hfill
    \begin{minipage}[h]{.5\textwidth}
        \centering
        \includegraphics[width=\textwidth]{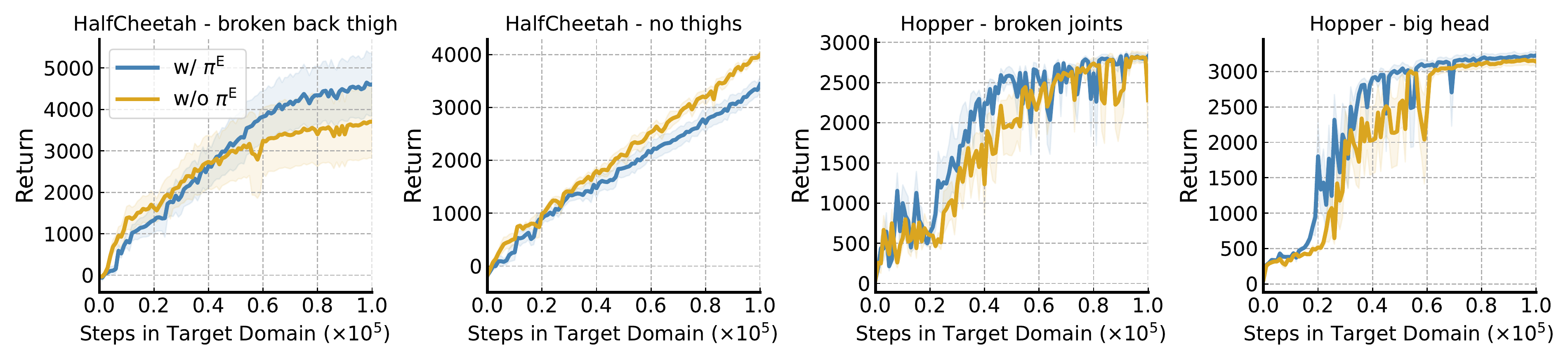}
        \captionof{figure}{Effect of the optimistic exploration technique~(\textit{i.e.},~$\pi^{\rm E}$).}
        \label{fig:ablation_optimistic}
    \end{minipage}
    \vspace{0.2em}
\end{figure}

\textbf{Data selection ratio $\xi\%$.}~We employ different data ratios~($10\%,25\%,50\%,75\%$) for the variants of our algorithm. Furthermore, we propose a baseline algorithm \textit{Mix} that learns with all source domain samples without selection~($\omega(s,a,s')\equiv 1$ in Eq.~(\ref{critic_update})). The results, shown in Figure~\ref{fig:ablation_data_ratio}, indicate that our algorithm performs robustly under various ratios within a specific range (\textit{e.g.},~$\xi\%\leq 50\%$). Surprisingly, \textit{Mix} performs exceptionally well in environments with kinematic mismatches but fails in scenarios with morphology shifts. We attribute this to the less significant dynamics shift induced by kinematic changes compared to morphology changes.

\textbf{Optimistic data collection.}~To validate the effect of the optimistic exploration $\pi^\mathrm{E}$, we introduce a variant of our method without $\pi^\mathrm{E}$. The results are shown in Figure~\ref{fig:ablation_optimistic}. Removing the optimistic exploration technique results in performance degradation in three out of four environments concerning the sample efficiency, validating the effectiveness of the exploration policy. 

\subsection{Performance under Offline Source with Online Target}
\label{sec:exp:offline}

\begin{table}[t]
    \small
    \centering
    \caption{Results in the \textit{offline source online target} setting. We evaluate the algorithms via the performance of the learned policy in the target domain and report the mean and std of the results across five runs with different random seeds.}
    \vspace{1em}
    \begin{tabular}{c|c|ccc}
        \toprule
         & Offline only & Symmetric sampling & H2O & VGDF + BC \\
         \midrule
         halfcheetah - broken back thigh & $1128~\pm156$ & $2439~\pm390$ & \hl{~$5761$~}$\pm148$ & $4834~\pm250$ \\
         halfcheetah - no thighs & $361~\pm39$ & $2211~\pm77$ & $3023~\pm77$ & \hl{~$3910$~}$\pm160$ \\
         \midrule
         hopper - broken hips & $155~\pm19~$ & $2607~\pm181$ & $2435~\pm325$ & \hl{~$2785$~}$\pm75$ \\
         hopper - short feet & $399~\pm5$ & $2144~\pm509$ & $868~\pm73$ & \hl{~$3060$~}$\pm60$ \\
         \midrule
         walker - broken right thigh & $1453~\pm412$ & $709~\pm128$ & \hl{~$3743$~}$\pm50$ & $3000~\pm388$ \\
         walker - no right thigh & $975~\pm131$ & $872~\pm301$ & $2600~\pm355$ & \hl{~$3293$~}$\pm306$ \\
        \bottomrule
    \end{tabular}
    \label{tab:offline_results}
\end{table}

In this subsection, we extend our method to the setting with a source domain offline dataset and limited online interactions with the target domain, investigating the performance of our method without online access to the source domain. We use the D4RL \texttt{medium} datasets~\cite{fu2020d4rl} of three environments~(\textit{i.e.},~\textit{HalfCheetah}, \textit{Walker}, \textit{Hopper}) for evaluation. We compare the proposed \textit{VGDF + BC}~(Section~\ref{method:offline}) with the following baselines: \textit{Offline only} that directly transfers the offline learned policy via CQL~\cite{kumar2020conservative} to the target domain; \textit{Symmetric sampling}~\cite{ball2023efficient} that samples  50\% of the data from the target domain replay buffer and the remaining 50\% from the source domain offline dataset for each training step; \textit{H2O}~\cite{niu2022trust} that penalizes the Q function learning on source domain transitions with the estimated dynamics gap via learned classifiers. All algorithms have limited interactions with the target domain to $10^5$ steps. The experimental details are shown in Appendix~\ref{apd:offline_setting}.
The results shown in Table~\ref{tab:offline_results} demonstrate that our method outperforms the other methods in four out of six environments, indicating that filtering the source domain data with the value consistency paradigm is effective in the offline-online setting.


\subsection{Quantifying Dynamics Mismatch via Fictitious Value Proximity}
Although the empirical results suggest that our method can adapt the policy in the face of various dynamics shifts, the degree of the dynamics mismatch can only be evaluated via the adaptation performance rather than be quantified directly. Here, we propose quantifying the dynamics shifts via the proposed Fictitious Value Proximity (FVP)~(Section~\ref{method:fvp}). 

\begin{wrapfigure}{r}{0.33\textwidth} 
    \centering
    \vspace{-0.5em}
    \includegraphics[width=0.33\textwidth]{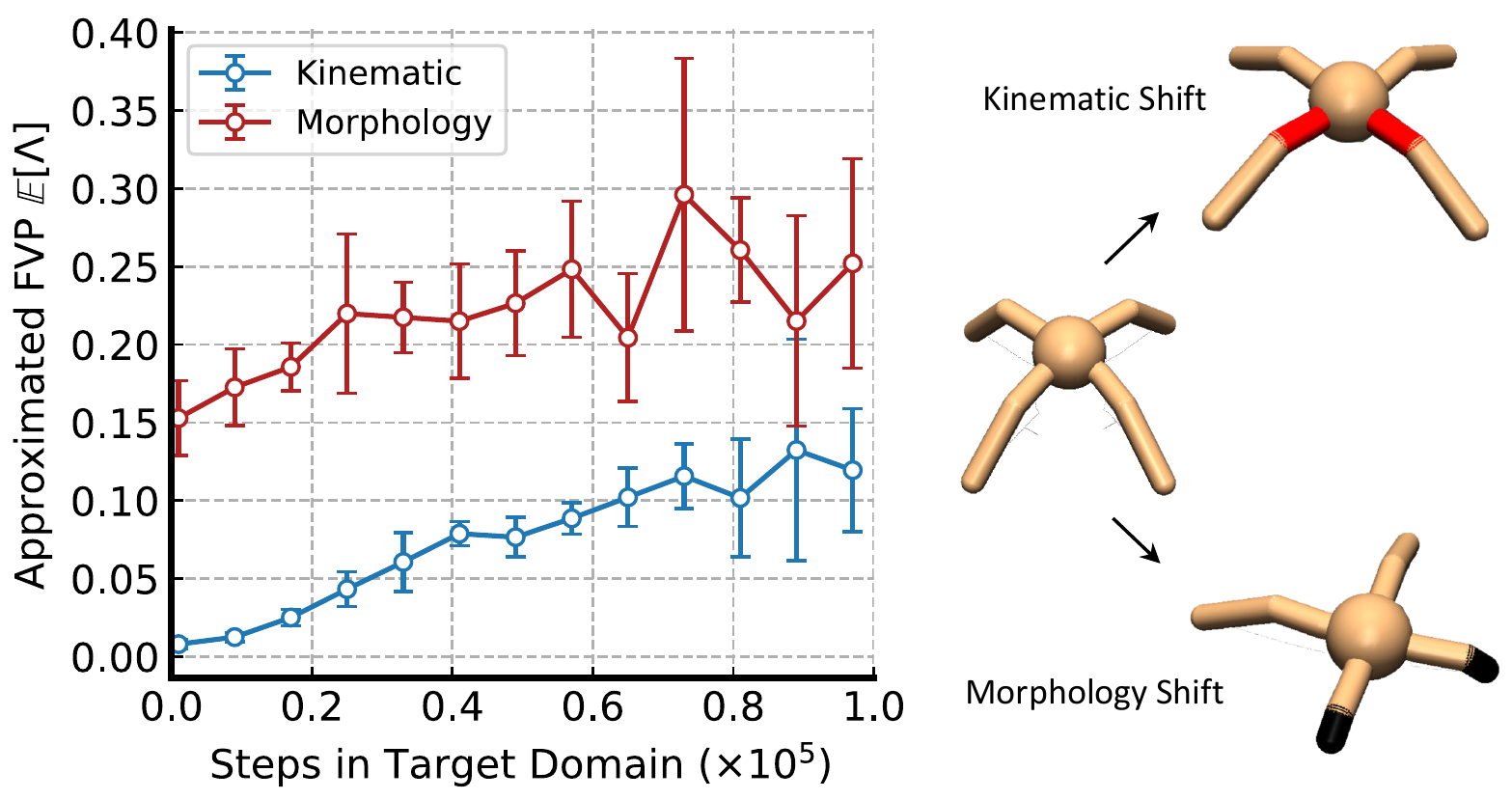}
    \captionof{figure}{Quantification analysis of the approximated FVP in Ant environments. 
    }
    \label{fig:fvp_evolution}
\end{wrapfigure}
We approximate the FVP in Eq.~(\ref{fve}) by calculating the average likelihood of a batch of samples from the source domain by $\mathbb{E}[\Lambda(s,a,s')]\approx \frac{1}{B}\sum_{(s,a,s')}\hat{\Lambda}(s,a,s')$. We show the approximated FVP in Ant environments with kinematic or morphology shifts in Figure~\ref{fig:fvp_evolution}. We observe a significant gap between the FVP values of the paired domains, which suggests the target domain with the morphology shifts is "closer" to the source domain than the target domain with the kinematic shifts with respect to the value difference. FVP measured by value differences quantifies the long-term effect on the expected return. Such a measurement can be regarded as a way to quantify the domain discrepancies.

\section{Conclusion}
This work addresses the online dynamics adaptation problem by proposing VGDF that selectively shares the source domain transitions from a value consistency paradigm. Starting from the motivation example, we reveal the limitation of the prior dynamics-based method. Then we introduce a novel value discrepancy perspective with theoretical analysis, motivated by the insight that paired transitions with consistent value targets can be regarded as equivalent for training. Practically, we propose VGDF and the variant for the offline source domain setting. Empirical studies demonstrate the effectiveness of our method under significant dynamics gaps, including kinematics shifts and morphology shifts. 

\textbf{Limitation and future directions.}~One limitation of our method is the reliance on the ensemble dynamics models. However, the recent work estimating the epistemic uncertainty with a single model~\cite{filos2021model} could be applicable. Furthermore, value-aware model learning~\cite{farahmand2017value} may improve our method by training dynamics models with accurate value predictions of the generated samples. Exploring the effectiveness of value consistency for generalizing across reward functions can be another direction for future research. Finally, validating the effectiveness of the data sharing method in the Sim2Real problem would contribute to the robotics community. The online interaction with the reality system could be risky, recent works~\cite{bharadhwaj2020conservative,thomas2021safe} can be integrated for safe online interactions.

\section*{Acknowledgments}
This work is supported by the National Natural Science Foundation of China (Grant No.62306242), the National Key R\&D Program of China (Grant No.2022ZD0160100), Shanghai Artificial Intelligence Laboratory, Shanghai Municipal Science and Technology Major Project (No.2021SHZDZX0103), Scientific Research Development Center in Higher Education Institutions by the Ministry of Education, China (No.2021ITA10013), Shanghai Engineering Research Center of Al and Robotics, Engineering Research Center of AI and Robotics, Ministry of Education, China. We would like to thank the anonymous reviewers for their valuable suggestions.



\bibliographystyle{plain}
\bibliography{ref}

\clearpage

\appendix

\section{Algorithm Description}
\label{apd:pseudocode}
The pseudocode of \textit{VGDF} is presented in Algorithm~\ref{alg:pseudocode_online}. We utilize SAC~\cite{haarnoja2018soft} as our backbone algorithm. We employ a fixed entropy temperature coefficient in all experiments, demonstrating sufficient empirical performance. The training of the dynamics model ensemble follows prior works~\cite{chua2018deep,janner2019trust}~with the MLE loss. The calculation of the Fictitious Value Proximity follows Eq.~(\ref{fvp_likelihood}) proposed in Section~\ref{method:fvp}. Furthermore, the pseudocode of \textit{VGDF + BC} is presented in Algorithm~\ref{alg:pseudocode_offline}. We introduce the value-normalized tradeoff between the behavior cloning loss and the policy gradient following the prior work~\cite{fujimoto2021minimalist}.

\begin{algorithm}[tbh]
    \setstretch{1.1}
    \caption{Value-Guided Data Filtering~(VGDF)}
    \label{alg:pseudocode_online}
    {\bf Input:}  Source domain $\mathcal{M}_{src}$, target domain $\mathcal{M}_{tar}$, and transition ratio $\Gamma~(=10)$ (source vs. target).\\
    {\bf Initialization:} Policy $\pi$, exploration policy $\pi^{\mathrm{E}}$, value functions $\{Q_{\theta_i}\}_{i=1,2}$, replay buffers $\{D_{src}, D_{tar}\}$, dynamics model ensemble $\{T_{\phi_i}\}_{i=1}^M$, data selection ratio $\xi$, batch size $B$, entropy temperature coefficient $\lambda$.
    \begin{algorithmic}[1]
        \setstretch{1.35}
        \FOR{t = 1, 2, \dots}
            \STATE \# \texttt{Interact with the source domain}
            \STATE Sample transition $(s_{src},a_{src},r_{src},s'_{src})$ using  $\pi^\mathrm{E}$ in $\mathcal{M}_{src}$
            \STATE $D_{src}~\leftarrow~D_{src}\cup~(s_{src},a_{src},r_{src},s'_{src})$
            \STATE \# \texttt{Interact with the target domain}
            \IF{ $t~\%~\Gamma == 0$ }
                \STATE Sample transition $(s_{tar},a_{tar},r_{tar},s'_{tar})$ using  $\pi$ in $\mathcal{M}_{tar}$
                \STATE $D_{tar}~\leftarrow~D_{tar}\cup~(s_{tar},a_{tar},r_{tar},s'_{tar})$
            \ENDIF
            \STATE Optimize dynamics ensemble $\{T_{\phi_i}\}_{i=1}^M$ with $D_{tar}$ via Eq.~(\ref{eq:update_dynamics})
            \STATE Sample $b_{src} := \{(s,a,r,s')\}^{B}_{src}$ from $D_{src}$ 
            \STATE Sample $b_{tar} := \{(s,a,r,s')\}^{B}_{tar}$ from $D_{tar}$
            \STATE Obtain Fictitious Value Proximity~(FVP) $\{\Lambda(s,a,s')\}^B$ via Eq.~(\ref{fvp_likelihood}) for transitions in $b_{src}$
            \STATE Obtain FVP quantile $\Lambda_{\xi\%}$ of $\{\Lambda(s,a,s')\}^B$
            \STATE \# \texttt{Optimize value function with data filtering}
            \STATE $\theta_{i=1,2}~\xleftarrow~\underset{\theta_i}{\arg\min}~\dfrac{1}{2B}~\sum_{b_{tar}}\left[\left(Q_{\theta_i} - \mathcal{T}Q_{\theta_i}\right)^2\right] +$
            \STATE $\qquad\qquad\qquad\qquad \dfrac{1}{\lfloor2B\cdot\xi\%\rfloor}~\sum_{b_{src}}\left[\mathds{1}\left({\Lambda(s,a,s') > \Lambda_{\xi\%}}\right)\left(Q_{\theta_i} - \mathcal{T}Q_{\theta_i}\right)^2\right]$
            \STATE \# \texttt{Optimize policies}
            \STATE $\pi^\mathrm{E}~\leftarrow~\underset{\pi^\mathrm{E}}{\arg\max}~\dfrac{1}{2B}\sum_{b_{tar}\cup~ b_{src}}\Bigl[\max\left\{Q_{\theta_1}(s,a),Q_{\theta_2}(s,a)\right\}|_{a\sim \pi^\mathrm{E}(\cdot|s)} + \lambda \mathcal{H}[\pi^\mathrm{E}]\Bigr]$
            \STATE $\pi~~~\leftarrow~\underset{\pi}{\arg\max}~\dfrac{1}{2B}\sum_{b_{tar}\cup~b_{src}}\left[\min\left\{Q_{\theta_1}(s,a),Q_{\theta_2}(s,a)\right\}|_{a\sim \pi(\cdot|s)} + \lambda \mathcal{H}[\pi]\right]$
        \ENDFOR
    \end{algorithmic}
\end{algorithm}

\begin{algorithm}[tbh]
    \setstretch{1.1}
    \caption{Value-Guided Data Filtering + Behavior Cloning (VGDF + BC)}
    \label{alg:pseudocode_offline}
    {\bf Input:}  Source domain offline dataset $D_{src}$, target domain $\mathcal{M}_{tar}$, max interaction steps with the target domain $T_{\rm max}$, and transition ratio $\Gamma~(:= \frac{|D_{src}|}{T_{\rm max}}=10)$ (source vs. target).\\
    {\bf Initialization:} Policy $\pi$, value functions $\{Q_{\theta_i}\}_{i=1,2}$, target domain replay buffer $D_{tar}$, dynamics model ensemble $\{T_{\phi_i}\}_{i=1}^M$, data selection ratio $\xi$, batch size $B$, entropy temperature coefficient $\lambda$, train repeat $K$, behavior cloning constant $\alpha$.
    \begin{algorithmic}[1]
        \setstretch{1.35}
        \FOR{t = 1, 2, \dots, $T_{\rm max}$}
            \STATE \# \texttt{Interact with the target domain}
            \STATE Sample transition $(s_{tar},a_{tar},r_{tar},s'_{tar})$ using  $\pi$ in $\mathcal{M}_{tar}$
            \STATE $D_{tar}~\leftarrow~D_{tar}\cup~(s_{tar},a_{tar},r_{tar},s'_{tar})$

            \STATE \# \texttt{Repeat training for $K$ times per step}
            \FOR{k = 1, 2, \dots, K}
                \STATE Optimize dynamics ensemble $\{T_{\phi_i}\}_{i=1}^M$ with $D_{tar}$ via Eq.~(\ref{eq:update_dynamics})
                \STATE Sample $b_{src} := \{(s,a,r,s')\}^{B}_{src}$ from $D_{src}$ 
                \STATE Sample $b_{tar} := \{(s,a,r,s')\}^{B}_{tar}$ from $D_{tar}$
                \STATE Obtain Fictitious Value Proximity~(FVP) $\{\Lambda(s,a,s')\}^B$ via Eq.~(\ref{fvp_likelihood}) for transitions in $b_{src}$
                \STATE Obtain FVP quantile $\Lambda_{\xi\%}$ of $\{\Lambda(s,a,s')\}^B$
                \STATE \# \texttt{Optimize value function with data filtering}
                \STATE $\theta_{i=1,2}~\xleftarrow~\underset{\theta_i}{\arg\min}~\dfrac{1}{2B}~\sum_{b_{tar}}\left[\left(Q_{\theta_i} - \mathcal{T}Q_{\theta_i}\right)^2\right] +$
                \STATE $\qquad\qquad\qquad\qquad \dfrac{1}{\lfloor2B\cdot\xi\%\rfloor}~\sum_{b_{src}}\left[\mathds{1}\left({\Lambda(s,a,s') > \Lambda_{\xi\%}}\right)\left(Q_{\theta_i} - \mathcal{T}Q_{\theta_i}\right)^2\right]$
                \STATE \# \texttt{Optimize policy with behavior cloning regularization}
                \STATE $\beta~=~~\alpha / \left\{ \frac{1}{2B}\sum_{b_{tar}\cup~b_{src}}\left[\left|\min\left\{Q_{\theta_1}(s,a),Q_{\theta_2}(s,a)\right\}_{a\sim \pi(\cdot|s)}\right|\right] \right\}$
                \STATE $\pi~\leftarrow~\underset{\pi}{\arg\max}~\dfrac{\beta}{2B}\sum_{b_{tar}\cup~b_{src}}\left[\min\left\{Q_{\theta_1}(s,a),Q_{\theta_2}(s,a)\right\}_{a\sim \pi(\cdot|s)} + \lambda \mathcal{H}[\pi]\right] - $ 
                \STATE $\qquad\qquad\qquad~ \dfrac{1}{B}~\sum_{(s, a)\sim b_{src}}\left[\left(\pi(s) - a\right)^2\right]$
            \ENDFOR
        \ENDFOR
    \end{algorithmic}
\end{algorithm}

\section{Proofs of the Performance Guarantees}
\label{apd:proof}

This section presents the proof of our main results. Specifically, we propose that the value discrepancy can be leveraged for the performance guarantee across different domains Lemma~\ref{lemma:2_apd}. In Theorem~\ref{thm:dynamics_bound_apd}, we convert the performance bound induced by the value discrepancy into a novel form for the offline source domain setting.

\begin{theorem}
    \label{thm:dynamics_bound_apd}
    \rm{\textbf{(Performance bound controlled by dynamics discrepancy.)}}
    \textit{
    Denote the source domain and target domain with different dynamics as $\mathcal{M}_{src}$ and $\mathcal{M}_{tar}$, respectively. 
    We have the performance difference of any policy $\pi$ evaluated under $\mathcal{M}_{src}$ and $\mathcal{M}_{tar}$ be bounded as below,
    \begin{equation*}
        \eta_{\mathcal{M}_{tar}}(\pi) \geq \eta_{\mathcal{M}_{src}}(\pi) - \dfrac{2\gamma r_{\rm max}}{(1-\gamma)^2}\cdot\mathbb{E}_{\rho_{src}^\pi}\left[D_{\rm TV}\left(P_{src}(\cdot|s,a)\| P_{tar}(\cdot|s,a)\right)\right]\nonumber.
    \end{equation*}
    }
\end{theorem}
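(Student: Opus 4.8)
\textbf{Proof proposal for Theorem~\ref{thm:dynamics_bound_apd}.}

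The plan is to bound the performance difference $\eta_{\mathcal{M}_{tar}}(\pi) - \eta_{\mathcal{M}_{src}}(\pi)$ by relating it to the discrepancy in value functions, and then pushing that value discrepancy back to a single-step dynamics discrepancy via a telescoping/recursion argument. The natural starting point is the simulation-lemma-style identity: since $\eta_{\mathcal{M}}(\pi) = \mathbb{E}_{s_0\sim\rho_0}[V^\pi_{\mathcal{M}}(s_0)]$, the difference $\eta_{\mathcal{M}_{tar}}(\pi) - \eta_{\mathcal{M}_{src}}(\pi)$ equals $\mathbb{E}_{s_0\sim\rho_0}[V^\pi_{\mathcal{M}_{tar}}(s_0) - V^\pi_{\mathcal{M}_{src}}(s_0)]$. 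First I would write the Bellman equations for $V^\pi_{\mathcal{M}_{tar}}$ and $V^\pi_{\mathcal{M}_{src}}$, subtract them (the reward terms cancel since $r$ is shared), and unroll to get something like $V^\pi_{\mathcal{M}_{tar}}(s) - V^\pi_{\mathcal{M}_{src}}(s) = \gamma\,\mathbb{E}_{a\sim\pi}\big[\mathbb{E}_{P_{tar}}[V^\pi_{\mathcal{M}_{tar}}(s')] - \mathbb{E}_{P_{src}}[V^\pi_{\mathcal{M}_{tar}}(s')]\big] + \gamma\,\mathbb{E}_{a\sim\pi,\,s'\sim P_{src}}[V^\pi_{\mathcal{M}_{tar}}(s') - V^\pi_{\mathcal{M}_{src}}(s')]$. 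The second term has the same structural form as the LHS with the next state, so iterating this recursion collapses the $V^\pi_{\mathcal{M}_{src}}$ contribution into a geometric series weighted by $\rho^\pi_{src}$, which is exactly why the $\rho^\pi_{src}$ occupancy and the $1/(1-\gamma)$ factor appear.

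Next I would bound the per-step cross term $\big|\mathbb{E}_{P_{tar}(\cdot|s,a)}[V^\pi_{\mathcal{M}_{tar}}(s')] - \mathbb{E}_{P_{src}(\cdot|s,a)}[V^\pi_{\mathcal{M}_{tar}}(s')]\big|$ using the standard inequality $|\mathbb{E}_P[f] - \mathbb{E}_Q[f]| \le 2\,\|f\|_\infty\,D_{\rm TV}(P\|Q)$ (or $\sup f - \inf f$ times $D_{\rm TV}$). Since $r\in[0,r_{\max}]$, we have $\|V^\pi_{\mathcal{M}_{tar}}\|_\infty \le r_{\max}/(1-\gamma)$, so each per-step term is at most $\frac{2 r_{\max}}{1-\gamma}\,D_{\rm TV}(P_{src}(\cdot|s,a)\|P_{tar}(\cdot|s,a))$. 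Combining this with the extra $\gamma$ from the Bellman unrolling and the extra $1/(1-\gamma)$ from summing the geometric series of the recursion gives the claimed constant $\frac{2\gamma r_{\max}}{(1-\gamma)^2}$, with the dynamics discrepancy averaged under $\rho^\pi_{src}$. Dropping the absolute value (keeping only the lower bound direction) yields the stated inequality. Note this is essentially the argument already referenced from \cite{janner2019trust,eysenbach2020off}, specialized to shared reward and differing dynamics.

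The main obstacle — really a bookkeeping subtlety rather than a deep difficulty — is being careful about which occupancy measure the recursion produces and ensuring the discount bookkeeping is tight. One must verify that unrolling the recursion $V^\pi_{\mathcal{M}_{tar}}(s) - V^\pi_{\mathcal{M}_{src}}(s) = (\text{cross term at } s) + \gamma\,\mathbb{E}_{s'\sim P_{src}}[\,\cdots\,]$ from the initial distribution $\rho_0$ genuinely reconstructs $\rho^\pi_{src}(s,a) = (1-\gamma)\sum_t \gamma^t \mathrm{P}^\pi_{src,t}(s)\pi(a|s)$ up to the $(1-\gamma)^{-1}$ normalization, and that one factor of $\gamma$ survives outside the expectation (it does, because the cross term itself first appears with a $\gamma$ in front after one Bellman step). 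A slightly cleaner alternative I might use instead is to invoke the known result directly: bound $|\eta_{\mathcal{M}_{tar}}(\pi) - \eta_{\mathcal{M}_{src}}(\pi)|$ via the difference of discounted state-occupancy measures $\|\nu^\pi_{tar} - \nu^\pi_{src}\|_1 \le \frac{\gamma}{1-\gamma}\mathbb{E}_{\rho^\pi_{src}}[\,2 D_{\rm TV}(P_{src}\|P_{tar})\,]$ (a standard consequence of the occupancy-measure recursion), and then multiply by $\|r\|_\infty/(1-\gamma)$-type bounds; but the telescoping-value approach above is the most self-contained.
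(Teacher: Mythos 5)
Your proposal is correct and follows essentially the same route as the paper: establish the telescoping/simulation identity $\eta_{\mathcal{M}_{src}}(\pi)-\eta_{\mathcal{M}_{tar}}(\pi)=\frac{\gamma}{1-\gamma}\mathbb{E}_{\rho_{src}^\pi}\bigl[\mathbb{E}_{P_{src}}[V^\pi_{\mathcal{M}_{tar}}(s')]-\mathbb{E}_{P_{tar}}[V^\pi_{\mathcal{M}_{tar}}(s')]\bigr]$ and then bound the integrand by $\frac{2r_{\max}}{1-\gamma}D_{\rm TV}(P_{src}\|P_{tar})$ using $\|V^\pi_{\mathcal{M}_{tar}}\|_\infty\le r_{\max}/(1-\gamma)$. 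The only cosmetic difference is that the paper obtains the identity via the hybrid-return interpolation ($W_j$) of the Telescoping Lemma rather than by unrolling the Bellman recursion, which is an equivalent derivation of the same lemma.
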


\begin{proof}
    We have
    \begin{align}
        \eta_{src}(\pi) - \eta_{tar}(\pi) &= \dfrac{\gamma}{1-\gamma}\mathbb{E}_{\rho_{src}^\pi(s,a)}\left[\int_{s'}P_{src}(s'|s,a)V^{\pi}_{tar}(s') - \int_{s'}P_{tar}(s'|s,a)V^{\pi}_{tar}(s')ds'\right]\nonumber~(\text{Lemma \ref{lemma:telescoping_apd}})\\
        &=\dfrac{\gamma}{1-\gamma}\mathbb{E}_{\rho_{src}^\pi(s,a)}\left[\int_{s'}(P_{src}(s'|s,a) - P_{tar}(s'|s,a))V^{\pi}_{tar}(s')ds'\right]\nonumber\\
        &\leq \dfrac{\gamma}{1-\gamma}\mathbb{E}_{\rho_{src}^\pi(s,a)}\left[\int_{s'}\left|(P_{src}(s'|s,a) - P_{tar}(s'|s,a))V^{\pi}_{tar}(s')\right|ds'\right]\nonumber\\
        &\leq \dfrac{\gamma}{1-\gamma}\cdot\dfrac{r_{\rm max}}{1-\gamma}\mathbb{E}_{\rho_{src}^\pi(s,a)}\left[\int_{s'}\left|P_{src}(s'|s,a) - P_{tar}(s'|s,a)\right|ds'\right]\nonumber\\
        &= \dfrac{2\gamma r_{\rm max}}{(1-\gamma)^2}\mathbb{E}_{\rho_{src}^\pi(s,a)}\left[D_{\rm TV}\left(P_{src}(\cdot|s,a)\| P_{tar}(\cdot|s,a)\right)\right].
    \end{align}
    
\end{proof}

\begin{theorem}
    \label{thm:value_bound_easy_apd}
    \rm{\textbf{(Performance bound controlled by value difference.)}}
    \textit{
    Denote the source domain and target domain as $\mathcal{M}_{src}$ and $\mathcal{M}_{tar}$, respectively. We have the performance guarantee of any policy $\pi$ over the two MDPs:
    \begin{equation*}
        \eta_{\mathcal{M}_{tar}}(\pi)\geq\eta_{\mathcal{M}_{src}}(\pi) - \dfrac{\gamma}{1-\gamma}\cdot \mathbb{E}_{\rho_{\mathcal{M}_{src}}^{\pi}}\biggl[\biggl|\mathbb{E}_{P_{src}} \left[V^{\pi}_{\mathcal{M}_{tar}}(s')\right]-\mathbb{E}_{P_{tar}} \left[V^{\pi}_{\mathcal{M}_{tar}}(s')\right]\biggr|\biggr].
    \end{equation*}
    }
\end{theorem}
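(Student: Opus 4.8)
\textbf{Proof proposal for Theorem~\ref{thm:value_bound_easy_apd}.}

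The plan is to reuse the telescoping identity already invoked in the proof of Theorem~\ref{thm:dynamics_bound_apd} (Lemma~\ref{lemma:telescoping_apd}), which expresses the performance gap between the two domains as a discounted expectation, under $\rho_{\mathcal{M}_{src}}^\pi$, of the difference between the one-step bootstrapped target-domain values:
\begin{equation*}
    \eta_{\mathcal{M}_{src}}(\pi) - \eta_{\mathcal{M}_{tar}}(\pi) = \dfrac{\gamma}{1-\gamma}\,\mathbb{E}_{\rho_{\mathcal{M}_{src}}^\pi(s,a)}\Bigl[\mathbb{E}_{P_{src}}\bigl[V^\pi_{\mathcal{M}_{tar}}(s')\bigr] - \mathbb{E}_{P_{tar}}\bigl[V^\pi_{\mathcal{M}_{tar}}(s')\bigr]\Bigr].
\end{equation*}
This is exactly the point at which the proof of Theorem~\ref{thm:dynamics_bound_apd} chose to push the value function out of the integral and bound it crudely by $r_{\rm max}/(1-\gamma)$; here I would instead stop one step earlier and keep the inner expression intact.

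The remaining steps are short. First, bound the right-hand side by its absolute value and then move the absolute value inside the outer expectation using $|\mathbb{E}[X]|\le \mathbb{E}[|X|]$ (equivalently, the triangle inequality for the discounted sum of state distributions, since $\rho_{\mathcal{M}_{src}}^\pi$ is a nonnegative normalized measure). This yields
\begin{equation*}
    \eta_{\mathcal{M}_{src}}(\pi) - \eta_{\mathcal{M}_{tar}}(\pi) \le \dfrac{\gamma}{1-\gamma}\,\mathbb{E}_{\rho_{\mathcal{M}_{src}}^\pi}\Bigl[\bigl|\mathbb{E}_{P_{src}}[V^\pi_{\mathcal{M}_{tar}}(s')] - \mathbb{E}_{P_{tar}}[V^\pi_{\mathcal{M}_{tar}}(s')]\bigr|\Bigr],
\end{equation*}
and rearranging gives the claimed lower bound on $\eta_{\mathcal{M}_{tar}}(\pi)$. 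The whole argument is essentially a one-line weakening of the telescoping lemma; no new machinery is needed.

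There is no serious obstacle here — the only thing to be careful about is the direction of the bound (we want a lower bound on $\eta_{\mathcal{M}_{tar}}$, so we upper-bound $\eta_{\mathcal{M}_{src}}-\eta_{\mathcal{M}_{tar}}$), and making sure Lemma~\ref{lemma:telescoping_apd} is stated for a general bounded function $V^\pi_{\mathcal{M}_{tar}}$ composed with the two transition kernels rather than specialized to the TV-distance form. If the lemma in the appendix is the standard "simulation lemma"/telescoping decomposition (writing $\eta_{\mathcal{M}_{src}}(\pi)-\eta_{\mathcal{M}_{tar}}(\pi)$ as a discounted sum of per-step Bellman-residual-style terms evaluated against $V^\pi_{\mathcal{M}_{tar}}$), then the identity above is immediate; otherwise one first proves it by the usual add-and-subtract telescoping over the horizon, which is routine. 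Theorem~\ref{thm:value_bound} (the offline-source variant) would then follow the same template with $Q^\pi_{\mathcal{M}_{tar}}$ in place of $V^\pi_{\mathcal{M}_{tar}}$ plus an extra step converting the change of action-sampling policy from $\pi_D$ to $\pi$ into the total-variation policy-regularization term via a second application of the triangle inequality and the $r_{\rm max}/(1-\gamma)$ value bound.
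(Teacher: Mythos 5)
Your proposal is correct and matches the paper's proof essentially line for line: the paper likewise applies Lemma~\ref{lemma:telescoping_apd} with $\mathcal{M}_1=\mathcal{M}_{src}$, $\mathcal{M}_2=\mathcal{M}_{tar}$ and then bounds the resulting expectation by moving the absolute value inside via $|\mathbb{E}[X]|\le\mathbb{E}[|X|]$, stopping one step before the crude $r_{\rm max}/(1-\gamma)$ bound used for Theorem~\ref{thm:dynamics_bound_apd}. No gaps to flag.
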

\begin{proof}
    We have
    \begin{align}
        \eta_{src}(\pi) - \eta_{tar}(\pi) &= \dfrac{\gamma}{1-\gamma}\mathbb{E}_{\rho_{src}^\pi(s,a)}\left[\int_{s'}P_{src}(s'|s,a)V^{\pi}_{\mathcal{M}_{tar}}(s') - \int_{s'}P_{tar}(s'|s,a)V^{\pi}_{\mathcal{M}_{tar}}(s')ds'\right]\nonumber~(\text{Lemma \ref{lemma:telescoping_apd}})\\
        &= \dfrac{\gamma}{1-\gamma}\cdot \mathbb{E}_{\rho_{\mathcal{M}_{src}}^{\pi}}\biggl[\mathbb{E}_{P_{src}} \left[V^{\pi}_{\mathcal{M}_{tar}}(s')\right]-\mathbb{E}_{P_{tar}} \left[V^{\pi}_{\mathcal{M}_{tar}}(s')\right]\biggr]\nonumber\\
        &\leq \dfrac{\gamma}{1-\gamma}\cdot \mathbb{E}_{\rho_{\mathcal{M}_{src}}^{\pi}}\biggl[\biggl|\mathbb{E}_{P_{src}} \left[V^{\pi}_{\mathcal{M}_{tar}}(s')\right]-\mathbb{E}_{P_{tar}} \left[V^{\pi}_{\mathcal{M}_{tar}}(s')\right]\biggr|\biggr]\nonumber
    \end{align}
\end{proof}

\begin{theorem}
    \label{thm:value_bound_apd}
    \textit{
    Under the setting with offline source domain dataset $D$ whose empirical estimation of the data collection policy is $\pi_D(a|s):=\frac{\sum_{D}\mathds{1}(s, a)}{\sum_{D}\mathds{1}(s)}$, let $\mathcal{M}_{src}$ and $\mathcal{M}_{tar}$ denote the source and target domain, respectively. We have the performance guarantee of any policy $\pi$ over the two MDPs:
    \begin{align}
        \eta_{\mathcal{M}_{tar}}(\pi)\geq\eta_{\mathcal{M}_{src}}(\pi)
        - \dfrac{4 r_{\rm max}}{(1-\gamma)^2} \mathbb{E}_{\rho_{\mathcal{M}_{src}}^{\pi_D}, P_{src}}\left[D_{TV}(\pi_{D} || \pi)\right]
        - \dfrac{1}{1-\gamma}\mathbb{E}_{\rho_{\mathcal{M}_{src}}^{\pi_D}}\Big[\Big|\zeta(s,a)\Big|\Big]\label{thm:51.1},
    \end{align}
    where $\zeta(s,a) := \mathbb{E}_{P_{src},\pi}\left[Q_{\mathcal{M}_{tar}}^\pi(s',a')\right] - \mathbb{E}_{P_{tar},\pi}\left[Q_{\mathcal{M}_{tar}}^\pi(s',a')\right]$.
    }
\end{theorem}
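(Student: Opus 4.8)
The plan is to prove Theorem~\ref{thm:value_bound_apd} by starting from the telescoping identity (Lemma~\ref{lemma:telescoping_apd}) expressed in the $Q$-function form rather than the $V$-function form, and then inserting the behavior policy $\pi_D$ into the expression to split the bound into a policy-mismatch piece and a dynamics-induced value-difference piece. Concretely, first I would write
\begin{equation*}
\eta_{src}(\pi) - \eta_{tar}(\pi) = \frac{\gamma}{1-\gamma}\,\mathbb{E}_{\rho_{\mathcal{M}_{src}}^{\pi}}\Bigl[\mathbb{E}_{P_{src},\pi}\!\left[Q^{\pi}_{\mathcal{M}_{tar}}(s',a')\right] - \mathbb{E}_{P_{tar},\pi}\!\left[Q^{\pi}_{\mathcal{M}_{tar}}(s',a')\right]\Bigr],
\end{equation*}
which follows from the same telescoping argument used in Theorem~\ref{thm:value_bound_easy_apd} after pushing the expectation over $a'\sim\pi$ inside (using $V^{\pi}(s') = \mathbb{E}_{a'\sim\pi}[Q^{\pi}(s',a')]$). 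The key obstacle is that this identity has the state-action occupancy $\rho^{\pi}_{\mathcal{M}_{src}}$ of the \emph{evaluation} policy $\pi$, whereas in the offline setting we only have data drawn from $\rho^{\pi_D}_{\mathcal{M}_{src}}$; so I need to swap the sampling distribution from $\pi$ to $\pi_D$ and absorb the error into term (a).

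The second step is the distribution swap. I would change the outer occupancy measure from $\rho^{\pi}_{\mathcal{M}_{src}}$ to $\rho^{\pi_D}_{\mathcal{M}_{src}}$ using a standard occupancy-measure difference bound: $\|\rho^{\pi}_{\mathcal{M}} - \rho^{\pi_D}_{\mathcal{M}}\|_1 \le \frac{2}{1-\gamma}\,\mathbb{E}_{\rho^{\pi_D}_{\mathcal{M}}}\!\left[D_{\rm TV}(\pi(\cdot|s)\,\|\,\pi_D(\cdot|s))\right]$ (this is the usual telescoping/simulation-lemma-style estimate on the Markov chain, and it is the offline-RL analogue of the bound underlying Theorem~\ref{thm:dynamics_bound_apd}). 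Since the integrand $\mathbb{E}_{P_{src},\pi}[Q^{\pi}_{\mathcal{M}_{tar}}] - \mathbb{E}_{P_{tar},\pi}[Q^{\pi}_{\mathcal{M}_{tar}}]$ is bounded in absolute value by $\frac{2 r_{\rm max}}{1-\gamma}$ (since $Q^{\pi}_{\mathcal{M}_{tar}}\in[0,\frac{r_{\rm max}}{1-\gamma}]$), multiplying the $\ell_1$ occupancy gap by this bound and by the prefactor $\frac{\gamma}{1-\gamma}$ gives a term of order $\frac{r_{\rm max}}{(1-\gamma)^3}\,\mathbb{E}_{\rho^{\pi_D}}[D_{\rm TV}(\pi\|\pi_D)]$; getting the cleaner $\frac{4 r_{\rm max}}{(1-\gamma)^2}$ coefficient claimed in the theorem will require being slightly more careful — presumably by also swapping the inner $a'\sim\pi$ expectation to $a'\sim\pi_D$ simultaneously and bounding both changes at once, or by noting $\gamma\le 1$ and combining constants. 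This constant-chasing is the part I'd expect to be fiddly but not conceptually hard.

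The third step is identifying what remains after the swap as term (b). Once the outer measure is $\rho^{\pi_D}_{\mathcal{M}_{src}}$, the residual expression is exactly $\frac{\gamma}{1-\gamma}\,\mathbb{E}_{\rho^{\pi_D}_{\mathcal{M}_{src}}}\bigl[\mathbb{E}_{P_{src},\pi}[Q^{\pi}_{\mathcal{M}_{tar}}(s',a')] - \mathbb{E}_{P_{tar},\pi}[Q^{\pi}_{\mathcal{M}_{tar}}(s',a')]\bigr]$, whose integrand is precisely $\zeta(s,a)$; bounding it by its absolute value and using $\frac{\gamma}{1-\gamma}\le\frac{1}{1-\gamma}$ yields the $\frac{1}{1-\gamma}\mathbb{E}_{\rho^{\pi_D}_{\mathcal{M}_{src}}}[|\zeta(s,a)|]$ term. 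The expectation $\mathbb{E}_{P_{src}}$ in $\zeta$ is over the real next-state distribution, matching the $\rho^{\pi_D}_{\mathcal{M}_{src}},P_{src}$ sampling in term (a), so everything lines up.

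The main obstacle, as flagged above, is the distribution-shift step: correctly bounding $\|\rho^{\pi}_{\mathcal{M}_{src}} - \rho^{\pi_D}_{\mathcal{M}_{src}}\|_1$ in terms of the per-state TV distance $D_{\rm TV}(\pi_D\|\pi)$ and then threading the constants through so that the stated coefficient $\frac{4 r_{\rm max}}{(1-\gamma)^2}$ comes out. I would handle the occupancy-difference lemma via the standard identity $\rho^{\pi} - \rho^{\pi_D} = (1-\gamma)\sum_t \gamma^t (P^{\pi}_t - P^{\pi_D}_t)$ combined with a one-step coupling argument on the transition kernels $P_{src}\pi$ versus $P_{src}\pi_D$; everything else is a direct application of Lemma~\ref{lemma:telescoping_apd} and triangle inequalities, exactly in the style of the proofs of Theorems~\ref{thm:dynamics_bound_apd} and~\ref{thm:value_bound_easy_apd} already given.
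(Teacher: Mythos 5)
Your first and third steps are fine (the starting identity is just Theorem~\ref{thm:value_bound_easy_apd} with $V^{\pi}(s')=\mathbb{E}_{a'\sim\pi}[Q^{\pi}(s',a')]$ pushed inside), but the measure-change step is a genuine gap, and it is not fixable by constant-chasing: the route you propose cannot yield the stated coefficient $\frac{4r_{\rm max}}{(1-\gamma)^2}$. Once you have the identity under $\rho^{\pi}_{\mathcal{M}_{src}}$ and restate it under $\rho^{\pi_D}_{\mathcal{M}_{src}}$ via $\|\rho^{\pi}-\rho^{\pi_D}\|_1\le \frac{2}{1-\gamma}\mathbb{E}_{\rho^{\pi_D}}\left[D_{\rm TV}(\pi\|\pi_D)\right]$, you necessarily pay the occupancy gap times the sup-norm of the integrand; that integrand is exactly $\zeta(s,a)$, a difference of $Q$-expectations under two dynamics, whose sup-norm is $\Theta(r_{\rm max}/(1-\gamma))$ in general and does \emph{not} shrink with $D_{\rm TV}(\pi_D\|\pi)$. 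Combined with the $\frac{\gamma}{1-\gamma}$ prefactor, the policy-regularization term along your route is irreducibly of order $r_{\rm max}/(1-\gamma)^3$; swapping the inner $a'\sim\pi$ expectation at the same time, or invoking $\gamma\le 1$, removes nothing, because the offending cross term (occupancy gap $\times$ dynamics-induced value gap) is still there. So your plan proves a strictly weaker bound (worse horizon dependence) than the theorem claims.

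The paper avoids the occupancy swap altogether, and that is precisely where the $(1-\gamma)^{-2}$ comes from. It writes $\eta_{\mathcal{M}_{tar}}(\pi)-\eta_{\mathcal{M}_{src}}(\pi) = \bigl(\eta_{\mathcal{M}_{src}}(\pi_D)-\eta_{\mathcal{M}_{src}}(\pi)\bigr) - \bigl(\eta_{\mathcal{M}_{src}}(\pi_D)-\eta_{\mathcal{M}_{tar}}(\pi)\bigr)$ and applies the two-policy extension of the telescoping lemma (Lemma~\ref{lemma:1_apd}) with $\pi_D$ executed in $\mathcal{M}_{src}$ as the first policy/MDP pair, so every expectation is under $\rho^{\pi_D}_{\mathcal{M}_{src}}$ \emph{by construction} rather than by a measure correction. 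The mismatch between $\pi_D$ and $\pi$ then only ever enters as a one-step action-distribution difference weighted by $\|Q\|_\infty\le r_{\rm max}/(1-\gamma)$ — once directly in the first bracket, once inside term $(a)$ of Lemma~\ref{lemma:2_apd} for the second bracket — each contributing $\frac{2r_{\rm max}}{(1-\gamma)^2}\,\mathbb{E}_{\rho^{\pi_D}_{\mathcal{M}_{src}},P_{src}}\!\left[D_{\rm TV}(\pi_D\|\pi)\right]$, while the dynamics mismatch is retained exactly as $\frac{1}{1-\gamma}\mathbb{E}_{\rho^{\pi_D}_{\mathcal{M}_{src}}}\!\left[|\zeta(s,a)|\right]$. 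To salvage your derivation, replace the occupancy-difference lemma with this decomposition (insert $\eta_{\mathcal{M}_{src}}(\pi_D)$ and use Lemmas~\ref{lemma:1_apd} and~\ref{lemma:2_apd}); as written, your argument establishes only an $O\!\left(r_{\rm max}(1-\gamma)^{-3}\right)$ version of the policy-regularization term.
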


\begin{proof}
    We have
    $$
    \eta_{\mathcal{M}_{tar}}(\pi) - \eta_{\mathcal{M}_{src}}(\pi) = \underbrace{\Big(\eta_{\mathcal{M}_{src}}(\pi_D) - \eta_{\mathcal{M}_{src}}(\pi)\Big)}_{(a)} - \underbrace{\Big(\eta_{\mathcal{M}_{src}}(\pi_D) - \eta_{\mathcal{M}_{tar}}(\pi)\Big)}_{(b)}.
    $$
    We have
    \begin{align*}
        \eta_{\mathcal{M}_{src}}(\pi_D) - \eta_{\mathcal{M}_{src}}(\pi) &\geq -\dfrac{1}{1-\gamma} \mathbb{E}_{\substack{s,a\sim\rho_{\mathcal{M}_{src}}^{\pi_D}\\s'\sim P_{src}(\cdot|s,a)}}\Big[\left|\mathbb{E}_{a'\sim\pi_D(\cdot|s')}\left[Q_{\mathcal{M}_{src}}^{\pi}(s',a')\right] - \mathbb{E}_{a'\sim\pi(\cdot|s')}\left[Q_{\mathcal{M}_{src}}^{\pi}(s',a')\right]\right|\Big]\\
        &= - \dfrac{1}{1-\gamma} \mathbb{E}_{\substack{s,a\sim\rho_{\mathcal{M}_{src}}^{\pi_D}\\s'\sim P_{src}(\cdot|s,a)}}\left[\left|\sum_{\mathcal{A}}\left(\pi_D(a'|s') - \pi(a'|s')\right)Q_{\mathcal{M}_{src}}^\pi(s',a')\right|\right]\\
        &\geq - \dfrac{1}{1-\gamma} \mathbb{E}_{\substack{s,a\sim\rho_{\mathcal{M}_{src}}^{\pi_D}\\s'\sim P_{src}(\cdot|s,a)}}\left[\left|\sum_{\mathcal{A}}\left(\pi_D(a'|s') - \pi(a'|s')\right)\dfrac{r_{\rm max}}{1-\gamma}\right|\right]\\
        &\geq - \dfrac{r_{\rm max}}{(1-\gamma)^2}\mathbb{E}_{\substack{s,a\sim\rho_{\mathcal{M}_{src}}^{\pi_D}\\s'\sim P_{src}(\cdot|s,a)}}\left[\sum_{\mathcal{A}}\left|\pi_D(a'|s') - \pi(a'|s')\right|\right]\\
        &= - \dfrac{2r_{\rm max}}{(1-\gamma)^2}\mathbb{E}_{\substack{s,a\sim\rho_{\mathcal{M}_{src}}^{\pi_D}\\s'\sim P_{src}(\cdot|s,a)}}\left[D_{TV}\left(\pi_D(\cdot|s')\parallel\pi(\cdot|s')\right)\right],\\
    \end{align*}
    and
    \begin{align*}
       &\quad- \Big(\eta_{\mathcal{M}_{src}}(\pi_D) - \eta_{\mathcal{M}_{tar}}(\pi)\Big)\nonumber\\
       &= - \dfrac{1}{1-\gamma}\mathbb{E}_{s,a\sim\rho_{\mathcal{M}_{src}}^{\pi_D}}\Big[\mathcal{G}_{\mathcal{M}_1, \mathcal{M}_2}^{\pi_1, \pi_2}(s,a)\Big]\qquad\qquad\qquad\qquad\qquad\qquad\qquad\qquad\qquad\qquad~~~(\text{Lemma \ref{lemma:1_apd}})\\
       &\geq - \dfrac{2 r_{max}}{(1-\gamma)^2}\mathbb{E}_{\substack{s,a\sim\rho_{\mathcal{M}_{src}}^{\pi_D}\\s'\sim P_{src}(\cdot|s,a)}}\left[D_{TV}(\pi_D(\cdot|s')\parallel\pi(\cdot|s'))\right] \\
       &\quad-\dfrac{1}{1-\gamma}\mathbb{E}_{s,a\sim\rho_{\mathcal{M}_{src}}^{\pi_D}} \left[\left|\mathbb{E}_{s',a'\sim P_{src},\pi}\left[Q^{\pi}_{\mathcal{M}_{tar}}(s',a')\right] - \mathbb{E}_{s',a'\sim P_{tar},\pi}\left[Q^{\pi}_{\mathcal{M}_{tar}}(s',a')\right]\right|\right].~(\text{Lemma \ref{lemma:2_apd}})
    \end{align*}
    Combining the two inequalities above completes the proof.
\end{proof}
\vspace{1em}

\section{Proofs of Lemmas}

This section provides proof of several lemmas used for our theoretical results. The first lemma is adopted from \cite{luo2018algorithmic}, and the proof is essentially the same as the original paper. Lemma~\ref{lemma:1_apd} and Lemma~\ref{lemma:2_apd} support the derivation of the performance difference bound in Theorem~\ref{thm:value_bound_apd}.

\begin{lemma}
    \label{lemma:telescoping_apd}
    \rm{\textbf{(Telescoping Lemma, Lemma 4.3 in \cite{luo2018algorithmic}.)}}
    Let $\mathcal{M}_1:=\left(\mathcal{S},\mathcal{A},P_{1},r,\gamma\right)$ and $\mathcal{M}_2:=\left(\mathcal{S},\mathcal{A},P_{2},r,\gamma\right)$ be two MDPs with different dynamics $P_1$ and $P_2$. Given a policy $\pi$, let 
    $$\mathcal{G}_{\mathcal{M}_1, \mathcal{M}_2}^{\pi}(s,a):=\mathbb{E}_{s'\sim P_1}\left[V_{\mathcal{M}_2}^{\pi}(s')\right] - \mathbb{E}_{s'\sim P_2}\left[V_{\mathcal{M}_2}^{\pi}(s')\right],$$ 
    we have
    $$
    \eta_{\mathcal{M}_1}(\pi) - \eta_{\mathcal{M}_2}(\pi) = \dfrac{\gamma}{(1-\gamma)}\mathbb{E}_{s,a\sim\rho_{\mathcal{M}_1}^{\pi}}\left[\mathcal{G}_{\mathcal{M}_1, \mathcal{M}_2}^{\pi}(s,a)\right].$$
\end{lemma}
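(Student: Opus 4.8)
The plan is to reproduce the standard telescoping (\emph{hybrid-rollout}) argument of Luo et al.~\cite{luo2018algorithmic}. The first step is to reduce the performance gap to a gap between the two policy-value functions at the start state: since $\eta_{\mathcal{M}_i}(\pi)=\mathbb{E}_{s_0\sim\rho_0}\bigl[V^{\pi}_{\mathcal{M}_i}(s_0)\bigr]$, it suffices to evaluate $\mathbb{E}_{s_0\sim\rho_0}\bigl[V^{\pi}_{\mathcal{M}_1}(s_0)-V^{\pi}_{\mathcal{M}_2}(s_0)\bigr]$ and show it equals the right-hand side.

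The core step interpolates between the two value functions. For each $j\ge 0$, let $W_j$ be the expected discounted return of the process that starts at $s_0\sim\rho_0$, follows $\pi$ under the dynamics $P_1$ for the first $j$ transitions, and follows $\pi$ under $P_2$ for every transition thereafter. Then $W_0=\mathbb{E}_{s_0\sim\rho_0}[V^{\pi}_{\mathcal{M}_2}(s_0)]$, while $W_j\to\mathbb{E}_{s_0\sim\rho_0}[V^{\pi}_{\mathcal{M}_1}(s_0)]$ as $j\to\infty$ (the remaining tail difference is $O(\gamma^{j}r_{\rm max}/(1-\gamma))\to 0$ because $\gamma<1$ and rewards are bounded). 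Consecutive terms $W_j$ and $W_{j+1}$ differ only in whether the $j$-th transition is sampled from $P_1$ or $P_2$, with the preceding steps governed by the $\mathcal{M}_1$-rollout of $\pi$ and the following steps scored by $V^{\pi}_{\mathcal{M}_2}$; expanding that one swapped step with the Bellman equation for $V^{\pi}_{\mathcal{M}_2}$ gives
\[
W_{j+1}-W_j=\gamma^{\,j+1}\,\mathbb{E}_{(s_j,a_j)}\bigl[\mathcal{G}^{\pi}_{\mathcal{M}_1,\mathcal{M}_2}(s_j,a_j)\bigr],
\]
where $(s_j,a_j)$ is distributed as the state--action pair at step $j$ when $\pi$ is run in $\mathcal{M}_1$, and $\mathcal{G}^{\pi}_{\mathcal{M}_1,\mathcal{M}_2}$ is as defined in the lemma.

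Summing the telescoping series over $j$ and collapsing the geometric weights against the occupancy measure, i.e.\ using $\sum_{j\ge 0}\gamma^{j}\,\mathrm{P}^{\pi}_{\mathcal{M}_1,j}(s)\,\pi(a|s)=\tfrac{1}{1-\gamma}\,\rho^{\pi}_{\mathcal{M}_1}(s,a)$, yields
\[
\mathbb{E}_{s_0\sim\rho_0}\bigl[V^{\pi}_{\mathcal{M}_1}(s_0)-V^{\pi}_{\mathcal{M}_2}(s_0)\bigr]=\frac{\gamma}{1-\gamma}\,\mathbb{E}_{(s,a)\sim\rho^{\pi}_{\mathcal{M}_1}}\bigl[\mathcal{G}^{\pi}_{\mathcal{M}_1,\mathcal{M}_2}(s,a)\bigr],
\]
which is exactly the claimed identity. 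The interchange of the infinite sum with the expectations is legitimate because $|\mathcal{G}^{\pi}_{\mathcal{M}_1,\mathcal{M}_2}|\le r_{\rm max}/(1-\gamma)$ and $\gamma<1$, so everything is absolutely summable.

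There is no genuinely hard part in this lemma; it is a short bookkeeping exercise. The two points that require care are (i) defining the hybrid processes $W_j$ precisely enough that $W_{j+1}-W_j$ isolates a single one-step dynamics swap and nothing else, and (ii) tracking the discount factors and the normalization of $\rho^{\pi}_{\mathcal{M}_1}$ so that the geometric series collapses to the exact prefactor $\gamma/(1-\gamma)$.
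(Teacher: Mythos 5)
Your proposal is correct and follows essentially the same route as the paper's own proof: the same hybrid-rollout quantities $W_j$, the same telescoping of $W_{j+1}-W_j$ to isolate a single swapped transition scored by $V^{\pi}_{\mathcal{M}_2}$, and the same collapse of the geometric weights into $\rho^{\pi}_{\mathcal{M}_1}$ with prefactor $\gamma/(1-\gamma)$. Your added remarks on boundedness justifying the limit $W_j\to\eta_{\mathcal{M}_1}(\pi)$ and the interchange of sums and expectations are fine refinements of details the paper leaves implicit.
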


\begin{proof}
    Define $W_j$ as the expected return when executing $\pi$ on $\mathcal{M}_1$ for the first j steps, then switching to $\pi$ and $\mathcal{M}_2$ for the remainder. That is
    $$
    W_j := \sum_{t=0}^\infty\gamma^t  \mathbb{E}_{\substack{t<j: s_t,a_t\sim P_1,\pi \\ t\geq j: s_t,a_t\sim P_2,\pi_2}}\left[r(s_t,a_t)\right] = \mathbb{E}_{\substack{t<j: s_t,a_t\sim P_1,\pi \\ t\geq j: s_t,a_t\sim P_2,\pi}}\left[\sum_{t=0}^\infty\gamma^tr(s_t,a_t)\right].
    $$
    
    Then we have
    \begin{align*}
        W_0 &= \mathbb{E}_{s,a\sim \rho_{\mathcal{M}_2,\pi}} \left[r(s_t,a_t)\right] = \eta_{\mathcal{M}_2}(\pi),\\
        \text{and~} W_\infty &= \mathbb{E}_{s,a\sim \rho_{\mathcal{M}_1,\pi}} \left[r(s_t,a_t)\right] = \eta_{\mathcal{M}_1}(\pi).
    \end{align*}
    
    Thus we can obtain
    \begin{equation}
        \label{telescoping_eq:1}
        \eta_{\mathcal{M}_1}(\pi) - \eta_{\mathcal{M}_2}(\pi) =     \sum_{j=0}^\infty(W_{j+1} - W_j).
    \end{equation}

    Convert $W_j$ and $W_{j+1}$ as following:
    \begin{align*}
        W_j &= R_{j} + \mathbb{E}_{s_{j},a_{j}\sim P_1,\pi}\left[\mathbb{E}_{s_{j+1}\sim P_2}\left[\gamma^{j+1} V_{\mathcal{M}_2}^{\pi}(s_{j+1})\right]\right]\\
        W_{j+1} &= R_{j} + \mathbb{E}_{s_{j},a_{j}\sim P_1,\pi}\left[\mathbb{E}_{s_{j+1}\sim P_1}\left[\gamma^{j+1} V_{\mathcal{M}_2}^{\pi}(s_{j+1})\right]\right]
    \end{align*}

    Plug back to Eq.\ref{telescoping_eq:1} and we obtain
    \begin{align*}
        \eta_{\mathcal{M}_1}(\pi) - \eta_{\mathcal{M}_2}(\pi) &= \sum_{j=0}^\infty(W_{j+1} - W_j)\\
        &= \sum_{j=0}^\infty \gamma^{j+1} \mathbb{E}_{s,a\sim \mathbb{P}_{\mathcal{M}_1,j}^{\pi}} \Big[\mathbb{E}_{s'\sim P_1}\left[V_{\mathcal{M}_2}^{\pi}(s')\right] - \mathbb{E}_{s'\sim P_2}\left[V_{\mathcal{M}_2}^{\pi}(s')\right]\Big]\\
        &= \dfrac{\gamma}{(1-\gamma)}\mathbb{E}_{s,a\sim \rho_{\mathcal{M}_1}^{\pi}}\Big[\mathbb{E}_{s'\sim P_1}\left[V_{\mathcal{M}_2}^{\pi}(s')\right] - \mathbb{E}_{s'\sim P_2}\left[V_{\mathcal{M}_2}^{\pi}(s')\right]\Big]\\
        &= \dfrac{\gamma}{(1-\gamma)}\mathbb{E}_{s,a\sim \rho_{\mathcal{M}_1}^{\pi}}\Big[\mathcal{G}_{\mathcal{M}_1, \mathcal{M}_2}^{\pi}(s,a)\Big].
    \end{align*}
\end{proof}
\vspace{1em}

\begin{lemma}
    \label{lemma:1_apd}
    \rm{\textbf{(Extension of Telescoping Lemma.)}}
    Let $\mathcal{M}_1:=\left(\mathcal{S},\mathcal{A},P_{1},r,\gamma\right)$ and $\mathcal{M}_2:=\left(\mathcal{S},\mathcal{A},P_{2},r,\gamma\right)$ be two MDPs with different dynamics $P_1$ and $P_2$. Given two policies $\pi_1,~\pi_2$, let 
    $$\mathcal{G}_{\mathcal{M}_1, \mathcal{M}_2}^{\pi_1, \pi_2}(s,a):=\mathbb{E}_{s',a'\sim P_1,\pi_1}\left[Q_{\mathcal{M}_2}^{\pi_2}(s',a')\right] - \mathbb{E}_{s',a'\sim P_2, \pi_2}\left[Q_{\mathcal{M}_2}^{\pi_2}(s',a')\right],$$ 
    we have
    $$
    \eta_{\mathcal{M}_1}(\pi_1) - \eta_{\mathcal{M}_2}(\pi_2) = \dfrac{1}{(1-\gamma)}\mathbb{E}_{s,a\sim\rho_{\mathcal{M}_1}^{\pi_1}}\left[\mathcal{G}_{\mathcal{M}_1, \mathcal{M}_2}^{\pi_1, \pi_2}(s,a)\right].$$
\end{lemma}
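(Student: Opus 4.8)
The plan is to mimic the proof of the Telescoping Lemma (Lemma~\ref{lemma:telescoping_apd}), but now with two different policies on the two MDPs, so that the hybrid-rollout argument couples $(\mathcal{M}_1,\pi_1)$ with $(\mathcal{M}_2,\pi_2)$. First I would define, for each $j\ge 0$, the quantity $W_j$ as the expected discounted return obtained by running $\pi_1$ in $\mathcal{M}_1$ for the first $j$ steps and then switching to $\pi_2$ in $\mathcal{M}_2$ for all remaining steps, exactly as in the proof of Lemma~\ref{lemma:telescoping_apd}. The boundary identifications are $W_0 = \eta_{\mathcal{M}_2}(\pi_2)$ and $W_\infty = \eta_{\mathcal{M}_1}(\pi_1)$, so the telescoping sum gives
\[
\eta_{\mathcal{M}_1}(\pi_1) - \eta_{\mathcal{M}_2}(\pi_2) = \sum_{j=0}^{\infty}(W_{j+1}-W_j).
\]

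Next I would compute $W_{j+1}-W_j$. The key point is that $W_j$ and $W_{j+1}$ agree on the first $j$ steps (both run $\pi_1,P_1$), and they agree on the contribution from step $j+1$ onward once the state-action pair at step $j+1$ is fixed; they differ only in how the state and action at step $j+1$ are generated. In $W_{j+1}$ the pair $(s_{j+1},a_{j+1})$ is drawn by transitioning under $P_1$ from $(s_j,a_j)$ and then acting with $\pi_1$; in $W_j$ it is drawn by transitioning under $P_2$ and acting with $\pi_2$. Writing the tail from step $j+1$ as $\gamma^{j+1}$ times the value $Q_{\mathcal{M}_2}^{\pi_2}(s_{j+1},a_{j+1})$ (valid because from step $j+1$ on, both processes run $\pi_2$ in $\mathcal{M}_2$, and one also needs to check that $W_j$'s reward at step $j$ can be folded consistently — here one has to be a little careful, since unlike the original lemma there is no $\gamma$ factored out, so the natural bookkeeping is to let $Q$ rather than $V$ absorb the step-$j+1$ reward). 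I would get
\[
W_{j+1}-W_j = \gamma^{j}\,\mathbb{E}_{s_j,a_j\sim \mathbb{P}_{\mathcal{M}_1,j}^{\pi_1}}\Big[\mathbb{E}_{s',a'\sim P_1,\pi_1}\!\big[Q_{\mathcal{M}_2}^{\pi_2}(s',a')\big] - \mathbb{E}_{s',a'\sim P_2,\pi_2}\!\big[Q_{\mathcal{M}_2}^{\pi_2}(s',a')\big]\Big],
\]
i.e. $\gamma^j$ times $\mathbb{E}_{s_j,a_j\sim \mathbb{P}_{\mathcal{M}_1,j}^{\pi_1}}[\mathcal{G}_{\mathcal{M}_1,\mathcal{M}_2}^{\pi_1,\pi_2}(s_j,a_j)]$. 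Summing over $j$ and using $\sum_{j\ge 0}\gamma^j \mathbb{P}_{\mathcal{M}_1,j}^{\pi_1} = \tfrac{1}{1-\gamma}\rho_{\mathcal{M}_1}^{\pi_1}$ yields the claimed identity with the $\tfrac{1}{1-\gamma}$ prefactor (no extra $\gamma$, in contrast to Lemma~\ref{lemma:telescoping_apd}).

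The main obstacle is the careful accounting of exactly which step's reward lives where, so that the difference telescopes cleanly into a $Q$-function expression with prefactor $\tfrac{1}{1-\gamma}$ rather than $\tfrac{\gamma}{1-\gamma}$: because $\mathcal{G}$ here is defined in terms of $Q_{\mathcal{M}_2}^{\pi_2}$ evaluated at $(s',a')$ — which already includes the immediate reward at that pair — the switch between $W_j$ and $W_{j+1}$ happens effectively "one action earlier" than in the value-function version, which is what removes one factor of $\gamma$. I would double-check the boundary terms and the re-indexing of the discount power, and confirm that the step-$j$ action in $W_j$ is drawn from $\pi_1$ (not $\pi_2$) so that the only mismatch between consecutive $W$'s is the single transition-plus-action at the switch point. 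Everything else is routine: linearity of expectation, the geometric-series identity for the discounted occupancy measure, and the definition of $\eta_{\mathcal M}(\pi)=\mathbb{E}_{s,a\sim\rho_{\mathcal M}^\pi}[r(s,a)]$.
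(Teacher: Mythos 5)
Your overall strategy (hybrid returns $W_j$, telescoping, then the discounted-occupancy identity) is the same as the paper's, but the step you yourself flag as the ``main obstacle'' is exactly where the argument fails. Under the convention you propose --- in $W_j$ the step-$j$ action is still drawn from $\pi_1$, so that consecutive hybrids differ only in how $(s_{j+1},a_{j+1})$ is generated --- the mismatched pair sits at step $j+1$ and carries discount $\gamma^{j+1}$, as your own sentence about writing the tail as $\gamma^{j+1}Q_{\mathcal{M}_2}^{\pi_2}(s_{j+1},a_{j+1})$ already says; hence $W_{j+1}-W_j=\gamma^{j+1}\,\mathbb{E}_{s_j,a_j\sim\mathbb{P}_{\mathcal{M}_1,j}^{\pi_1}}\bigl[\mathcal{G}_{\mathcal{M}_1,\mathcal{M}_2}^{\pi_1,\pi_2}(s_j,a_j)\bigr]$, not $\gamma^{j}\,\mathbb{E}[\cdot]$, and summing gives the prefactor $\tfrac{\gamma}{1-\gamma}$ --- which is what the paper's own proof of Lemma~\ref{lemma:1_apd} in fact derives (its last display has $\tfrac{\gamma}{1-\gamma}$, at odds with the $\tfrac{1}{1-\gamma}$ in the statement). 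Moreover, the device you invoke to ``remove one factor of $\gamma$'' (letting the switch happen one action earlier) does not shift the discount; it breaks the boundary term instead: with the step-$j$ action drawn from $\pi_1$, one gets $W_0=\mathbb{E}_{s_0\sim\rho_0,\,a_0\sim\pi_1}\bigl[Q_{\mathcal{M}_2}^{\pi_2}(s_0,a_0)\bigr]\neq\eta_{\mathcal{M}_2}(\pi_2)$ unless $\pi_1=\pi_2$, so the telescoping sum no longer equals $\eta_{\mathcal{M}_1}(\pi_1)-\eta_{\mathcal{M}_2}(\pi_2)$. With the alternative convention (step-$j$ action from $\pi_2$, which does give $W_0=\eta_{\mathcal{M}_2}(\pi_2)$), consecutive hybrids differ in two places carrying different discounts, and the per-step difference is $\gamma^{j}\,\mathbb{E}\bigl[\mathbb{E}_{a\sim\pi_1}[Q^{\pi_2}_{\mathcal{M}_2}(s_j,a)]-V^{\pi_2}_{\mathcal{M}_2}(s_j)\bigr]+\gamma^{j+1}\,\mathbb{E}\bigl[(\mathbb{E}_{P_1}-\mathbb{E}_{P_2})[V^{\pi_2}_{\mathcal{M}_2}(s')]\bigr]$, which does not collapse pointwise to $\mathcal{G}$ under either prefactor.

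A quick sanity check shows this is not a repairable bookkeeping issue: with $P_1=P_2$ and $\pi_1\neq\pi_2$ (one state, two actions with rewards $1$ and $0$, $\pi_1$ picking the first and $\pi_2$ the second) one has $\mathcal{G}\equiv 1$ and the return gap forces the constant $\tfrac{1}{1-\gamma}$, whereas with $\pi_1=\pi_2$ and $P_1\neq P_2$ the lemma must reduce to Lemma~\ref{lemma:telescoping_apd}, whose correct constant is $\tfrac{\gamma}{1-\gamma}$. So no single constant makes the displayed equality exact, and any derivation arriving at it (yours via the $\gamma^{j}$ reindexing, or the paper's via the boundary identification $W_0=\eta_{\mathcal{M}_2}(\pi_2)$ combined with a conversion that implicitly takes $a_j\sim\pi_1$) must contain a slip. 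What your method actually proves is the two-term decomposition above: a performance-difference (advantage) term weighted by $\tfrac{1}{1-\gamma}$ plus a dynamics-gap term weighted by $\tfrac{\gamma}{1-\gamma}$; the downstream bounds (Lemma~\ref{lemma:2_apd} and Theorem~\ref{thm:value_bound_apd}) can be recovered from that form up to constants and an initial-state correction, but the identity as stated cannot be proved as proposed.
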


\begin{proof}
    Define $W_j$ as the expected return when executing $\pi_1$ on $\mathcal{M}_1$ for the first j steps, then switching to $\pi_2$ and $\mathcal{M}_2$ for the remainder. That is
    $$
    W_j := \sum_{t=0}^\infty\gamma^t  \mathbb{E}_{\substack{t<j: s_t,a_t\sim P_1,\pi_1 \\ t\geq j: s_t,a_t\sim P_2,\pi_2}}\left[r(s_t,a_t)\right] = \mathbb{E}_{\substack{t<j: s_t,a_t\sim P_1,\pi_1 \\ t\geq j: s_t,a_t\sim P_2,\pi_2}}\left[\sum_{t=0}^\infty\gamma^tr(s_t,a_t)\right].
    $$
    
    Then we have
    \begin{align*}
        W_0 &= \mathbb{E}_{s,a\sim \rho_{\mathcal{M}_2,\pi_2}} \left[r(s_t,a_t)\right] = \eta_{\mathcal{M}_2}(\pi_2),\\
        \text{and~} W_\infty &= \mathbb{E}_{s,a\sim \rho_{\mathcal{M}_1,\pi_1}} \left[r(s_t,a_t)\right] = \eta_{\mathcal{M}_2}(\pi_1).
    \end{align*}
    
    Thus we can obtain
    \begin{equation}
        \label{eq:1}
        \eta_{\mathcal{M}_1}(\pi_1) - \eta_{\mathcal{M}_2}(\pi_2) =     \sum_{j=0}^\infty(W_{j+1} - W_j).
    \end{equation}

    Convert $W_j$ and $W_{j+1}$ as following:
    \begin{align*}
        W_j &= R_{j} + \mathbb{E}_{s_{j},a_{j}\sim P_1,\pi_1}\left[\mathbb{E}_{s_{j+1},a_{j+1}\sim P_2,\pi_2}\left[\gamma^{j+1} Q_{\mathcal{M}_2}^{\pi_2}(s_{j+1},a_{j+1})\right]\right]\\
        W_{j+1} &= R_{j} + \mathbb{E}_{s_{j},a_{j}\sim P_1,\pi_1}\left[\mathbb{E}_{s_{j+1},a_{j+1}\sim P_1,\pi_1}\left[\gamma^{j+1} Q_{\mathcal{M}_2}^{\pi_2}(s_{j+1},a_{j+1})\right]\right]
    \end{align*}

    Plug back to Eq.\ref{eq:1} and we obtain
    \begin{align*}
        \eta_{\mathcal{M}_1}(\pi_1) - \eta_{\mathcal{M}_2}(\pi_2) &= \sum_{j=0}^\infty(W_{j+1} - W_j)\\
        &= \sum_{j=0}^\infty \gamma^{j+1} \mathbb{E}_{s,a\sim \mathbb{P}_{\mathcal{M}_1,j}^{\pi_1}} \Big[\mathbb{E}_{s',a'\sim P_1,\pi_1}\left[Q_{\mathcal{M}_2}^{\pi_2}(s',a')\right] - \mathbb{E}_{s',a'\sim P_2, \pi_2}\left[Q_{\mathcal{M}_2}^{\pi_2}(s',a')\right]\Big]\\
        &= \dfrac{\gamma}{(1-\gamma)}\mathbb{E}_{s,a\sim \rho_{\mathcal{M}_1}^{\pi_1}}\Big[\mathbb{E}_{s',a'\sim P_1,\pi_1}\left[Q_{\mathcal{M}_2}^{\pi_2}(s',a')\right] - \mathbb{E}_{s',a'\sim P_2, \pi_2}\left[Q_{\mathcal{M}_2}^{\pi_2}(s',a')\right]\Big]\\
        &= \dfrac{\gamma}{(1-\gamma)}\mathbb{E}_{s,a\sim \rho_{\mathcal{M}_1}^{\pi_1}}\Big[\mathcal{G}_{\mathcal{M}_1, \mathcal{M}_2}^{\pi_1, \pi_2}(s,a)\Big].
    \end{align*}
\end{proof}
\vspace{1em}

\begin{lemma}
    \label{lemma:2_apd}
    \rm{\textbf{(Bound of $\mathcal{G}_{\mathcal{M}_1, \mathcal{M}_2}^{\pi_1, \pi_2}(s,a)$.)}}
    Let $$\mathcal{G}_{\mathcal{M}_1, \mathcal{M}_2}^{\pi_1, \pi_2}(s,a):=\mathbb{E}_{s',a'\sim P_1,\pi_1}\left[Q_{\mathcal{M}_2}^{\pi_2}(s',a')\right] - \mathbb{E}_{s',a'\sim P_2, \pi_2}\left[Q_{\mathcal{M}_2}^{\pi_2}(s',a')\right],$$
    we have
    \begin{align*}
        \mathcal{G}_{\mathcal{M}_1, \mathcal{M}_2}^{\pi_1, \pi_2}(s,a) \leq&~\dfrac{2r_{\rm max}}{1-\gamma}\mathbb{E}_{s'\sim P_1}\left[D_{TV}(\pi_1(\cdot|s')\parallel\pi_2(\cdot|s'))\right] \\
        &+ \Bigl|\mathbb{E}_{s',a'\sim P_1,\pi_2}\left[Q^{\pi_2}_{\mathcal{M}_2}(s',a')\right] - \mathbb{E}_{s',a'\sim P_2,\pi_2}\left[Q^{\pi_2}_{\mathcal{M}_2}(s',a')\right]\Bigr|.
    \end{align*}
\end{lemma}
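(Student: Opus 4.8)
The plan is to split $\mathcal{G}_{\mathcal{M}_1, \mathcal{M}_2}^{\pi_1, \pi_2}(s,a)$ into one piece that measures the mismatch between the two policies $\pi_1,\pi_2$ (under the common dynamics $P_1$) and one piece that measures the mismatch between the two dynamics $P_1,P_2$ (under the common policy $\pi_2$). Concretely, I would insert the cross term $\mathbb{E}_{s',a'\sim P_1,\pi_2}[Q^{\pi_2}_{\mathcal{M}_2}(s',a')]$ and write
\begin{align*}
\mathcal{G}_{\mathcal{M}_1, \mathcal{M}_2}^{\pi_1, \pi_2}(s,a) =&\; \Big(\mathbb{E}_{s',a'\sim P_1,\pi_1}[Q^{\pi_2}_{\mathcal{M}_2}(s',a')] - \mathbb{E}_{s',a'\sim P_1,\pi_2}[Q^{\pi_2}_{\mathcal{M}_2}(s',a')]\Big) \\
&+ \Big(\mathbb{E}_{s',a'\sim P_1,\pi_2}[Q^{\pi_2}_{\mathcal{M}_2}(s',a')] - \mathbb{E}_{s',a'\sim P_2,\pi_2}[Q^{\pi_2}_{\mathcal{M}_2}(s',a')]\Big).
\end{align*}
The second bracket is exactly the dynamics-mismatch term appearing in the claimed bound, so I would simply upper bound it by its absolute value, producing the second summand on the right-hand side.

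For the first bracket, observe that both expectations share the same outer law $s'\sim P_1(\cdot|s,a)$, so the difference equals $\mathbb{E}_{s'\sim P_1}\big[\sum_{a'\in\mathcal{A}}(\pi_1(a'|s')-\pi_2(a'|s'))\,Q^{\pi_2}_{\mathcal{M}_2}(s',a')\big]$. Using the uniform bound $0 \le Q^{\pi_2}_{\mathcal{M}_2} \le r_{\rm max}/(1-\gamma)$, which follows from the reward range $[0,r_{\rm max}]$ and the geometric series, together with the triangle inequality, this quantity is at most $\frac{r_{\rm max}}{1-\gamma}\mathbb{E}_{s'\sim P_1}\big[\sum_{a'}|\pi_1(a'|s')-\pi_2(a'|s')|\big]$. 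Since $\sum_{a'}|\pi_1(a'|s')-\pi_2(a'|s')| = 2\,D_{TV}(\pi_1(\cdot|s')\parallel\pi_2(\cdot|s'))$, the first bracket is bounded by $\frac{2r_{\rm max}}{1-\gamma}\mathbb{E}_{s'\sim P_1}[D_{TV}(\pi_1(\cdot|s')\parallel\pi_2(\cdot|s'))]$, which is the first summand. Adding the two bounds finishes the proof.

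This argument is essentially a decomposition plus the triangle inequality, so there is no genuine obstacle; the only points that need a line of justification are the uniform bound $|Q^{\pi_2}_{\mathcal{M}_2}|\le r_{\rm max}/(1-\gamma)$ and the identification of the $\ell_1$ distance between action distributions with $2\,D_{TV}$ (with the sum over $\mathcal{A}$ replaced by an integral if the action space is continuous). One could tighten the constant $2$ to $1$ by re-centering $Q^{\pi_2}_{\mathcal{M}_2}$ by a constant before applying the triangle inequality, but the stated constant already suffices for the use in Theorem~\ref{thm:value_bound_apd}.
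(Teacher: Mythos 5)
Your proposal is correct and matches the paper's own proof essentially line for line: the same insertion of the cross term $\mathbb{E}_{s',a'\sim P_1,\pi_2}\left[Q^{\pi_2}_{\mathcal{M}_2}(s',a')\right]$, the same bound on the policy-mismatch piece via $0\le Q^{\pi_2}_{\mathcal{M}_2}\le r_{\rm max}/(1-\gamma)$ and the identity $\sum_{a'}\left|\pi_1(a'|s')-\pi_2(a'|s')\right| = 2 D_{TV}$, and the same absolute-value bound on the dynamics-mismatch piece. No gaps; your remark about tightening the constant by re-centering $Q$ is a valid aside not pursued in the paper.
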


\begin{proof}
    We have
    \begin{align*}
        \mathcal{G}_{\mathcal{M}_1, \mathcal{M}_2}^{\pi_1, \pi_2}(s,a):=&\mathbb{E}_{s',a'\sim P_1,\pi_1}\left[Q_{\mathcal{M}_2}^{\pi_2}(s',a')\right] - \mathbb{E}_{s',a'\sim P_2, \pi_2}\left[Q_{\mathcal{M}_2}^{\pi_2}(s',a')\right]\\
        =& \underbrace{\mathbb{E}_{s',a'\sim P_1,\pi_1}\left[Q_{\mathcal{M}_2}^{\pi_2}(s',a')\right] - \mathbb{E}_{s',a'\sim P_1,\pi_2}\left[Q_{\mathcal{M}_2}^{\pi_2}(s',a')\right]}_{(a)} \\
        ~& + \underbrace{\mathbb{E}_{s',a'\sim P_1,\pi_2}\left[Q_{\mathcal{M}_2}^{\pi_2}(s',a')\right] - \mathbb{E}_{s',a'\sim P_2, \pi_2}\left[Q_{\mathcal{M}_2}^{\pi_2}(s',a')\right]}_{(b)}. \\
    \end{align*}
    For $(a)$, we have
    \begin{align*}
        (a) =&~\mathbb{E}_{s'\sim P_1}\left[\sum_{a'} \pi_1(a'|s')Q_{\mathcal{M}_2}^{\pi_2}(s',a') - \pi_2(a'|s')Q_{\mathcal{M}_2}^{\pi_2}(s',a')\right]\\
        \leq&~\mathbb{E}_{s'\sim P_1}\left[\sum_{a'}\left|\pi_1(a'|s')-\pi_2(a'|s')\right|\dfrac{r_{\rm max}}{1-\gamma}\right]\\
        =&~\dfrac{r_{\rm max}}{1-\gamma}\mathbb{E}_{s'\sim P_1}\left[\sum_{a'}\left|\pi_1(a'|s')-\pi_2(a'|s')\right|\right]\\
        =&~\dfrac{2r_{\rm max}}{1-\gamma}\mathbb{E}_{s'\sim P_1}\left[D_{TV}\left(\pi_1(\cdot|s') \parallel \pi_2(\cdot|s')\right)\right].
    \end{align*}

    For $(b)$, we have
    \begin{align*}
        (b) =&~\mathbb{E}_{s',a'\sim P_1,\pi_2}\left[Q_{\mathcal{M}_2}^{\pi_2}(s',a')\right] - \mathbb{E}_{s',a'\sim P_2, \pi_2}\left[Q_{\mathcal{M}_2}^{\pi_2}(s',a')\right]\\
        \leq&~\Bigl|\mathbb{E}_{s',a'\sim P_1,\pi_2}\left[Q_{\mathcal{M}_2}^{\pi_2}(s',a')\right] - \mathbb{E}_{s',a'\sim P_2, \pi_2}\left[Q_{\mathcal{M}_2}^{\pi_2}(s',a')\right]\Bigr|.
    \end{align*}
    Adding these two bounds together yields the desired result.
\end{proof}

\section{Detailed Environment Setting}
\label{apd:env_setting}

\subsection{Grid World}
In the grid world environment, the agent obtains the X-Y coordination as the state and executes one of the four actions~(Up, Down, Left, Right) at each time step. A non-zero reward $1.0$ is provided only if the agent reaches the goal. Each episode terminates when the agent reaches the goal or the episode length of 256 is reached. The source domain and the target domain of the grid world are shown in Figure~\ref{fig:grid_world_illustration}. For each algorithm, the agent interacts with the source and target domains for $5e^5$ and $5e^4$ steps, respectively.
\begin{figure}[h]
    \centering
    \includegraphics[width=0.6\textwidth]{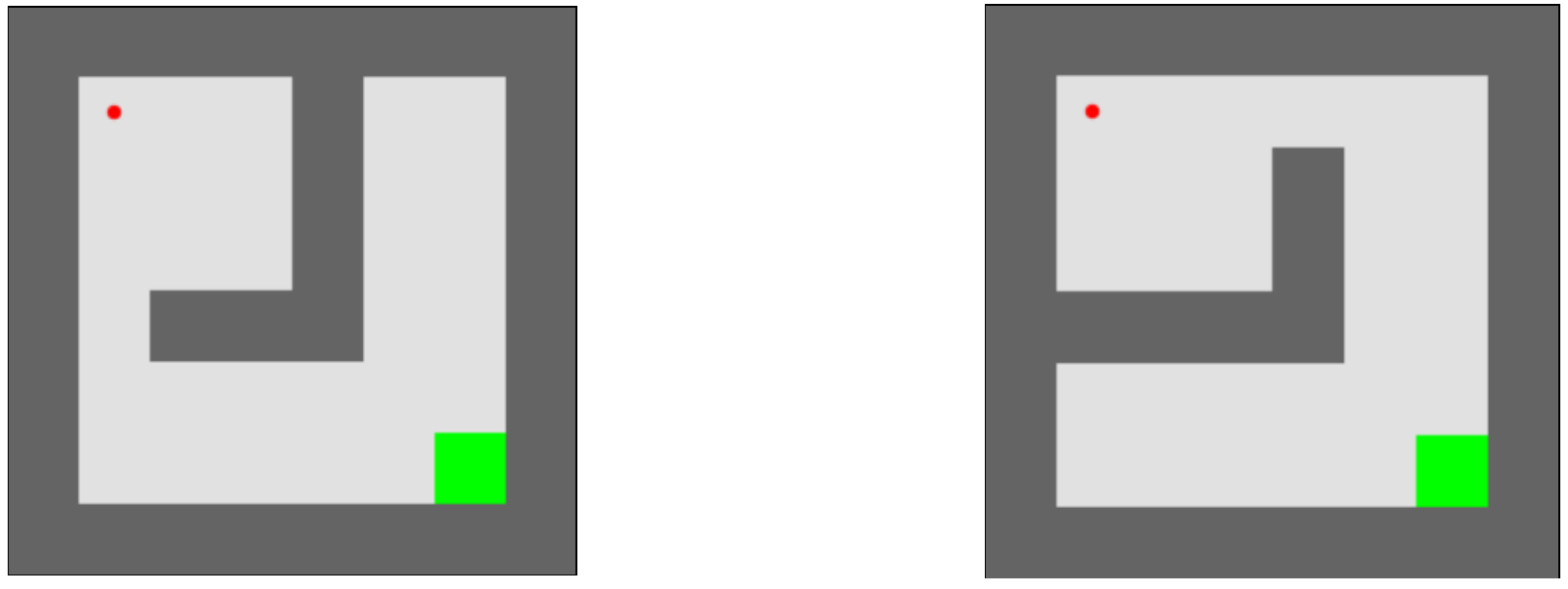}
    \caption{The source domain~(Left) and the target domain~(Right) of the grid world environments.}
    \label{fig:grid_world_illustration}
\end{figure}

\begin{figure}[t]
    \centering
    \includegraphics[width=\textwidth]{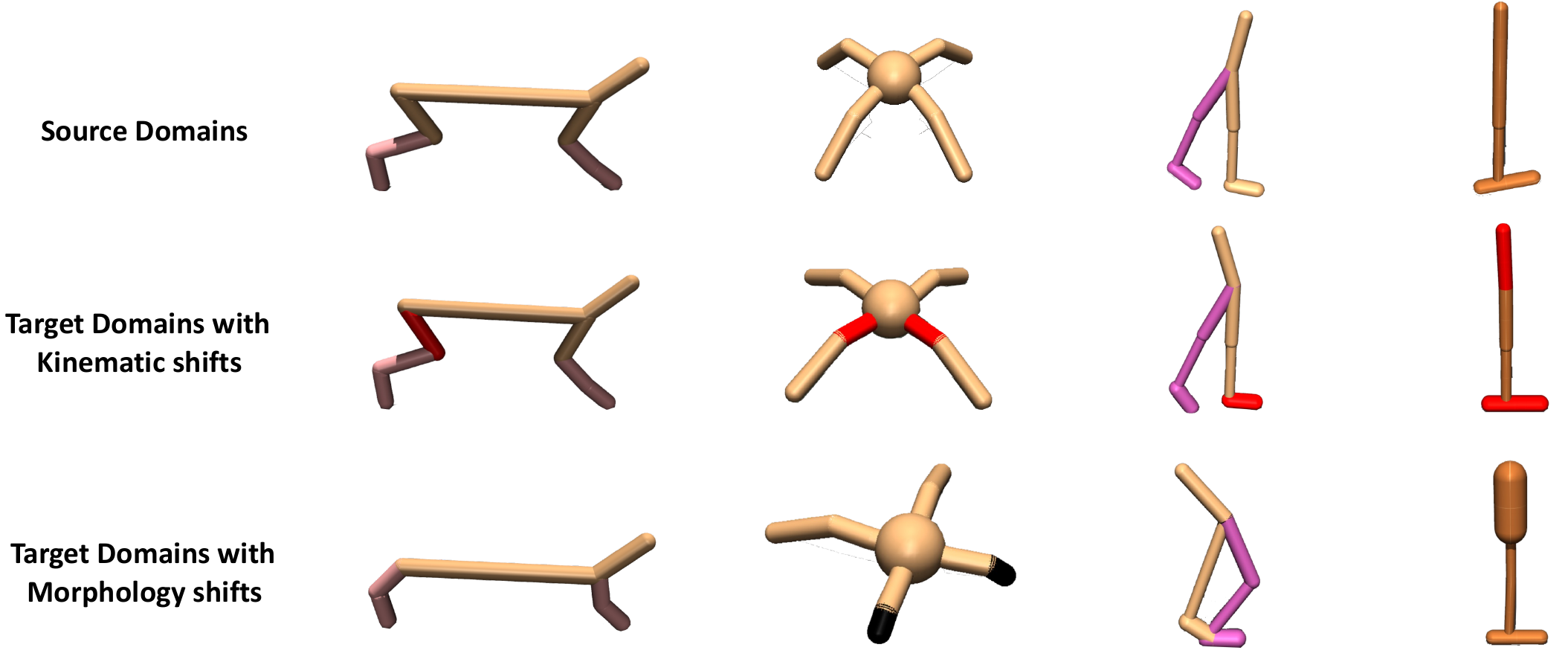}
    \caption{Illustration of all environments, including all source domains~(\textit{Top}), all target domains with kinematic shifts~(\textit{Middle}), and all target domains with morphology shifts~(\textit{Bottom}).}
    \label{fig:env_illustration}
\end{figure}

\subsection{Mujoco Environments}
To investigate the performance of the algorithm thoroughly, we design eight environments based on four Mujoco~\cite{todorov2012mujoco} benchmarks from Gym~\cite{brockman2016openai} including HalfCheetah-v2, Ant-v4, Walker2D-v2, and Hopper-v2. For each benchmark, we propose two variants with kinematic shift or morphology shift. We run all experiments with the original environment as the source domain and the variation environment as the target domain. Detailed modifications of the environments are shown below, and the illustration of the environments is shown in Figure~\ref{fig:env_illustration}. For algorithms that access interactions with both domains, the agent interacts with the source and target domains for $10^6$ and $10^5$ steps, respectively.

Detailed modifications of the environments with kinematic shifts are shown below:

\textbf{HalfCheetah - broken back thigh}: We modify the rotation range of the joint on the thigh of the back leg from $[-0.52, 1.05]$ to $[-0.0052, 0.0105]$. 

\textbf{Ant - broken hips}: We modify the rotation range of the joints on the hip of leg 1 and leg 2 from $[-30, 30]$ to $[-0.3, 0.3]$. 

\textbf{Walker - broken right foot}: We modify the rotation range of the joint on the foot of the right leg from $[-45, 45]$ to $[-0.45, 0.45]$. 

\textbf{Hopper - broken joints}: We modify the rotation range of the joint on the head from $[-150, 0]$ to $[-0.15, 0]$ and the joint on foot from $[-45, 45]$ to $[-18, 18]$.

Detailed modifications of the environments with morphology shifts are shown below:

\textbf{HalfCheetah - no thighs}: We modify the size of both thighs. Detailed modifications of the xml file are:
\begin{lstlisting}[language=XML]
<geom fromto="0 0 0 -0.0001 0 -0.0001" name="bthigh" size="0.046" type="capsule"/>
<body name="bshin" pos="-0.0001 0 -0.0001">
\end{lstlisting}
\begin{lstlisting}[language=XML]
<geom fromto="0 0 0 0.0001 0 0.0001" name="fthigh" size="0.046" type="capsule"/>
<body name="fshin" pos="0.0001 0 0.0001">     
\end{lstlisting}

\textbf{Ant - short feet}: We modify the size of feet on leg 1 and leg 2. Detailed modifications of the xml file are:
\begin{lstlisting}[language=XML]
<geom fromto="0.0 0.0 0.0 0.1 0.1 0.0" name="left_ankle_geom" size="0.08" type="capsule"/>
\end{lstlisting}
\begin{lstlisting}[language=XML]
<geom fromto="0.0 0.0 0.0 -0.1 0.1 0.0" name="right_ankle_geom" size="0.08" type="capsule"/>
\end{lstlisting}

\textbf{Walker - no right thigh}: We modify the size of thigh on the right leg. Detailed modifications of the xml file are:
\begin{lstlisting}[language=XML]
<body name="thigh" pos="0 0 1.05">
    <joint axis="0 -1 0" name="thigh_joint" pos="0 0 1.05" range="-150 0" type="hinge"/>
    <geom friction="0.9" fromto="0 0 1.05 0 0 1.045" name="thigh_geom" size="0.05" type="capsule"/>
    <body name="leg" pos="0 0 0.35">
      <joint axis="0 -1 0" name="leg_joint" pos="0 0 1.045" range="-150 0" type="hinge"/>
      <geom friction="0.9" fromto="0 0 1.045 0 0 0.3" name="leg_geom" size="0.04" type="capsule"/>
      <body name="foot" pos="0.2 0 0">
        <joint axis="0 -1 0" name="foot_joint" pos="0 0 0.3" range="-45 45" type="hinge"/>
        <geom friction="0.9" fromto="-0.0 0 0.3 0.2 0 0.3" name="foot_geom" size="0.06" type="capsule"/>
      </body>
    </body>
</body>
\end{lstlisting}

\textbf{Hopper - big head}: We modify the size of the head. Detailed modifications of the xml file are:
\begin{lstlisting}[language=XML]
<geom friction="0.9" fromto="0 0 1.45 0 0 1.05" name="torso_geom" size="0.125" type="capsule"/>
\end{lstlisting}

\section{Algorithms and Implementation Details}
\label{apd:implementation}

\subsection{Implementation Details}
\label{apd:implementation:details}
The details of our algorithm and baseline methods are specified as follows:

\textbf{SAC}:~We first specify the implementation of the shared backbone algorithm SAC utilized in all algorithms. The policy and the value function are two-layer MLP with 256 hidden units using ReLU activation. The learning rate is $3e^{-4}$. Discount $\gamma$ is set as $0.99$ in all environments. The temperature coefficient is fixed as $0.2$. The batch size is $128$. The smoothing coefficient of the target networks is $0.005$. The training delay of the policy is set as $2$. The replay buffer size is $1e^6$.

\textbf{VGDF}:~We use a five-layer MLP with 200 units as the dynamics model using Swish activation following prior works~\cite{chua2018deep,janner2019trust}. The ensemble size is $7$. We set the data selection ratio $\xi\%$ as $25\%$ in the experiments shown in Section~\ref{exp:main_exp}. For each probabilistic dynamics model $T_{\phi_i}(s_{t+1},r_t|s_t,a_t)=\mathcal{N}(\mu_{\phi_i}(s_t,a_t),\Sigma_{\phi_i}(s_t,a_t)),~i=1,\dots,M$, we train the model by maximizing the objective:
\begin{align}
    &J(\phi_i):=\mathbb{E}_{(s_t,a_t,r_t,s_{t+1})\sim D_{tar}}\bigl[\bigl[\mu_{\phi_i}(s_t,a_t)-\nonumber\\
    &\qquad\qquad(s_{t+1},r_t)\bigr]^{\top}\Sigma_{\phi_i}^{-1}(s_t,a_t)\left[\mu_{\phi_i}(s_t,a_t)-(s_{t+1},r_t)\right] + \log\mathrm{det}\Sigma_{\phi_i}(s_t,a_t)\bigr].\label{eq:update_dynamics}
\end{align}
    
The exploration policy is a two-layer MLP with 256 hidden units. We warm-start the algorithm by utilizing samples from both domains without selection for the first $1e5$ steps in the source domain.

\textbf{DARC}:~We follow the default configurations of the public implementation~(\url{https://github.com/google-research/google-research/tree/master/darc}). The domain classifiers $q_{\psi_{SAS}}(s_t,a_t,s_{t+1})$, $q_{\psi_{SA}}(s_t,a_t)$ are trained by maximizing the cross-entropy losses:
    \begin{align*}
        J(\psi_{SAS})&:=\mathbb{E}_{(s_t,a_t,s_{t+1})\sim D_{tar}}\left[\log q_{\psi_{SAS}}(tar|s_t,a_t,s_{t+1})\right]\\
        &~+ \mathbb{E}_{(s_t,a_t,s_{t+1})\sim D_{src}}\left[\log(1- q_{\psi_{SAS}}(tar|s_t,a_t,s_{t+1}))\right],\\
        J(\psi_{SA})&:=\mathbb{E}_{(s_t,a_t)\sim D_{tar}}\left[\log q_{\psi_{SA}}(tar|s_t,a_t)\right] + \mathbb{E}_{(s_t,a_t)\sim D_{src}}\left[\log(1- q_{\psi_{SA}}(tar|s_t,a_t))\right].
    \end{align*}
    Following the original implementation, we use the standard Gaussian noise for the domain classifier training. During training, a reward correction $\Delta r(s_t,a_t)$ is augmented to the original reward $r(s_t,a_t)$ of each source domain transition, \textit{i.e.}~$\tilde{r}(s_t,a_t):= r(s_t,a_t)+\Delta r(s_t,a_t)$. The reward correction is calculated by:
    $$
        \Delta r(s_t,a_t) := \log \frac{q_{\psi_{SAS}}(tar|s,a,s')}{q_{\psi_{SAS}}(src|s,a,s')}\frac{q_{\psi_{SA}}(src|s,a)}{q_{\psi_{SA}}(tar|s,a)}.
    $$
    We warm-start the algorithm by training with samples from both domains for the first $10^5$ steps following the original implementation.

\textbf{GARAT}:~We use the author implementation with default configurations~(\inlinecode{Supplemental} in \url{https://proceedings.neurips.cc/paper/2020/hash/28f248e9279ac845995c4e9f8af35c2b-Abstract.html}). We add the XML files of our customized environments to \inlinecode{rl\_gat/envs/assets/} folder. We limit the extra interactions with the grounded source environments as $10^5$ for fair comparisons with other algorithms. 

\textbf{Importance Weighting Clip~(IW Clip)}:~We use the domain classifiers same as DARC to calculate the importance weight $w(s,a,s')$. The importance weighting is calculated by:
    $$
    w(s,a,s'):=\dfrac{P_{tar}(s'|s,a)}{P_{src}(s'|s,a)}\approx\frac{q_{\psi_{SAS}}(tar|s,a,s')}{q_{\psi_{SAS}}(src|s,a,s')}\frac{q_{\psi_{SA}}(src|s,a)}{q_{\psi_{SA}}(tar|s,a)},
    $$
    where $q_{\psi_{SAS}}$ and $q_{\psi_{SA}}$ are the domain classifiers proposed in~\cite{eysenbach2020off}.
    We use the importance weighing to reweight the value training with source domain samples. Specifically, 
    $$\theta~\leftarrow~\underset{\theta}{\arg\min}~\frac{1}{2}\mathbb{E}_{(s,a,r,s')\sim D_{src}}\left[w(s,a,s')(Q_\theta-\mathcal{T}Q_\theta)^2\right].$$ 
    To stabilize training, we clip the importance weight between $[1e^{-4},1]$, same as the prior work~\cite{niu2022trust}.

\textbf{Finetune}:~We first train a policy in the source domain with $10^6$ steps. Then we transfer the policy to the target domain and further train the policy for $10^5$ steps.

The detailed hyperparameters of all algorithms are listed in Table.~\ref{tab:hyper}, and we use the same hyperparameters across all environments.

\begin{table*}[!tb]
    \centering
    \caption{Hyperparameters. "-" denotes the hyperparameter is not used in the algorithm. "$\leftarrow$" denotes the same choice as the algorithm in the first column.}
    \label{tab:hyper}
    \begin{tabular}{lccccc}
        \toprule  
        Hyperparameters & VGDF & DARC & GARAT & IW Clip & Finetune   \\  
        \midrule
        Hidden layers (Policy) & 2 & $\leftarrow$ & $\leftarrow$ & $\leftarrow$ & $\leftarrow$ \\
        Hidden units per layer (Policy) & 256 & $\leftarrow$ & $\leftarrow$ & $\leftarrow$ & $\leftarrow$ \\
        Hidden layers (Value) & 2 & $\leftarrow$ & $\leftarrow$ & $\leftarrow$ & $\leftarrow$ \\
        Hidden units per layer (Value) & 256 & $\leftarrow$ & $\leftarrow$ & $\leftarrow$ & $\leftarrow$ \\
        Hidden layers (Classifier) & - & 2 & - & 2 & - \\
        Hidden units per layer (Classifier) & - & 256 & - & 256 & - \\
        Hidden layers (Dynamics model) & 5 & - & - & - & - \\
        Hidden units per layer (Dynamics model) & 200 & - & - & - & - \\
        Ensemble size & 7 & - & - & - & - \\
        Learning rate & $3e^{-4}$ & $\leftarrow$ & $\leftarrow$ & $\leftarrow$ & $\leftarrow$\\
        Batch size & 128 & $\leftarrow$ & $\leftarrow$ & $\leftarrow$ & $\leftarrow$\\
        Fixed temperature coefficient & 0.2 & $\leftarrow$ & $\leftarrow$ & $\leftarrow$ & $\leftarrow$ \\
        Target smoothing coefficient & 0.005 & $\leftarrow$ & $\leftarrow$ & $\leftarrow$ & $\leftarrow$\\
        Policy training delay & 2 & $\leftarrow$ & $\leftarrow$ & $\leftarrow$ & $\leftarrow$\\
        Buffer size & $1e^6$ & $\leftarrow$ & $\leftarrow$ & $\leftarrow$ & $\leftarrow$\\
        Data selection ratio $\xi\%$ & $25\%$ & - & - & - & - \\
        Warm-start steps & $1e^5$ & $1e^5$ & - & $1e^5$ & - \\
        Importance weight clipping range & - & - & - & $[1e^{-4}, 1]$ & -\\
        Interactions with grounded src environment & - & - & $1e^5$ & - & -\\
        \bottomrule
    \end{tabular}
\end{table*}

\subsection{Implementation Details of the Offline-Online Experiments}
\label{apd:offline_setting}

To evaluate the performance of our algorithm in the offline source online target setting, we use \texttt{medium} datasets from D4RL~\cite{fu2020d4rl} for three environments (\textit{i.e.},~HalfCheetah, Hopper, Walker). We use the same source domain offline dataset for each environment's two different target domains. For the algorithms performing online learning using offline data~(\textit{i.e.},~\textit{Symmetric sampling}, \textit{H2O}, \textit{VGDF + BC}), we perform the online interactions with the target domain for $10^5$ steps and use $10^6$ source domain transitions, the training is repeated for $10$ times per step in the target domain. The details of the methods are specified as follows:

\textbf{Offline only}:~We directly transfer the policy learned through CQL~\cite{kumar2020conservative} with the source domain offline dataset. For the CQL implementation, we follow the suggested configurations in a public CQL implementation~(\url{https://github.com/tinkoff-ai/CORL}). We perform training for $10^6$ steps with the offline dataset and report the zero-shot performance of the learned policy in the target domain.

\textbf{Symmetric sampling}~\cite{ball2023efficient}:~
We perform the value function training by combining CQL optimization (with offline transitions) and SAC optimization (with online transitions). For each training step, we sample 50\% of the data from the target domain replay buffer and the remaining 50\% from the source domain offline dataset. The CQL and SAC loss is computed with the corresponding transitions.

\textbf{H2O}~\cite{niu2022trust}:~
We follow the original implementation that learns the classifiers to estimate the dynamics discrepancy across domains and perform the clipped importance weighting on the CQL loss on the source domain data. Same as \textit{Symmetric sampling}, we repeat the training for $10$ times per step in the target domain. 

\textbf{VGDF + BC}:~
We adapt VGDF to the Offline-Online setting by simply integrating the behavior cloning loss following \eqref{thm:51.1}. The training is repeated for $10$ times per step with the target domain the same as the baseline methods. For the trade-off between the policy gradient and behavior cloning, we use the value-normalized regularization following the \textit{TD3 + BC}~\cite{fujimoto2021minimalist} work and set the constant $\alpha$ as $5$. Furthermore, we remove the exploration policy proposed in Section~\ref{method:fvp} since the online access to the source domain is no longer available in the offline-online setting.

\section{Additional Experiment Results}
\label{apd:additional_results}

\subsection{Quantifying Dynamics Shifts via FVP}
In this section, we investigate whether the estimation of the value differences can quantify the difference across domains. Specifically, in different target domains of the same source domain, we demonstrate the estimation of FVP in two target domains. As the results show in Figure~\ref{fig:full_fvp}, the FVP differs in environments with different dynamics shifts~(Kinematic or morphology). We observe that the FVP values in two target domains gradually approach each other in three out of four environments~(HalfCheetah, Walker, Hopper), while the values in Ant remain relatively stationary. Furthermore, the FVP values in target domains with kinematic shifts are lower than those with morphology shifts across all four environments, which could result from the mismatched state space due to the limited joint ranges of robots in the target domain. Given the differences across different environments, we believe the FVP estimation could be used to quantify the domain differences.

\begin{figure}[h]
    \centering
    \includegraphics[width=0.99\textwidth]{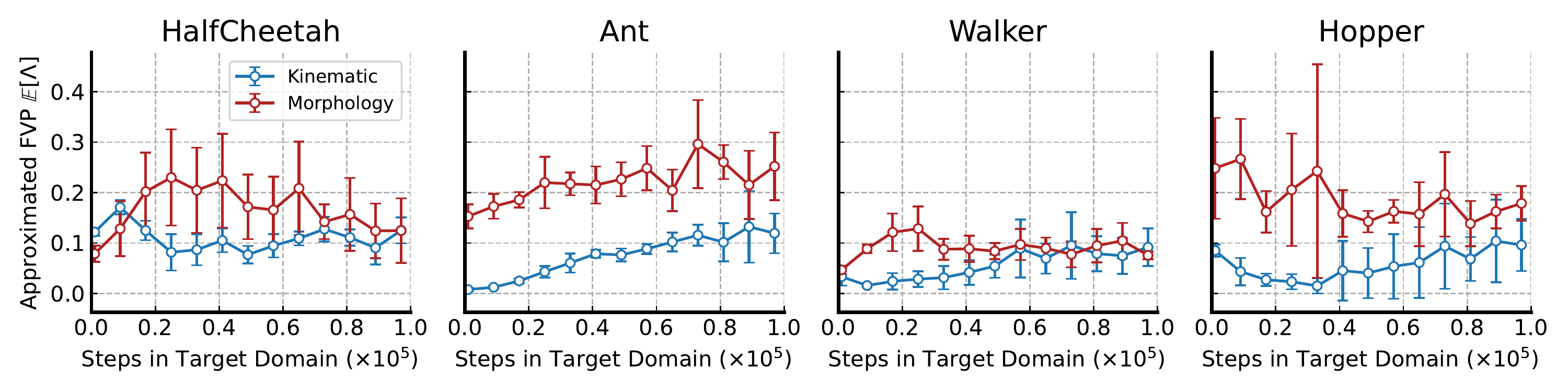}
    \caption{Quantification analysis of the approximated FVP in all environments with different dynamics shifts. The dots are averaged values, and the error bars indicate the standard error across five runs.}
    \label{fig:full_fvp}
\end{figure}

\subsection{Sensitivity to Ensemble Size}
\begin{figure*}[h]
    \centering
    \includegraphics[width=0.98\textwidth]{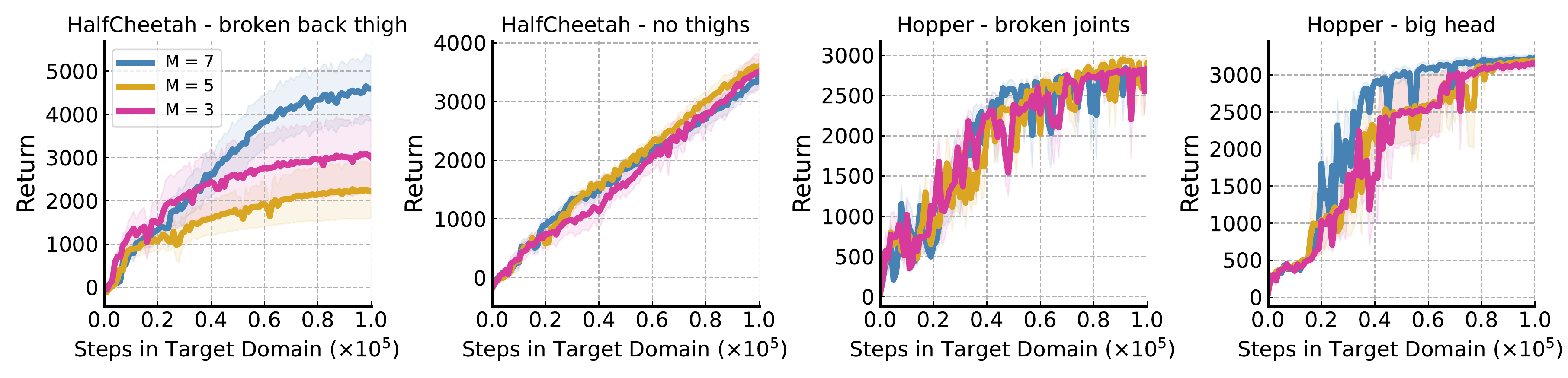}
    \caption{Performance of the variants with different ensemble size values $M$. The results validate that a smaller ensemble size is sufficient to achieve competitive asymptotic performance compared to the variant with a large ensemble size in most environments.}
    \label{fig:ablation_ensemble_size}
\end{figure*}
We have introduced the dynamics model ensemble to capture the epistemic uncertainty induced by the limited samples from the target domain. However, training the ensemble of the dynamics model takes extra computation resources. Unlike prior works in model-based RL~\cite{janner2019trust,shen2020model} that utilize the generated samples for training, we measure the value difference with the help of the generated samples. Therefore, we aim to investigate whether a smaller ensemble size is sufficient to achieve competitive asymptotic performance. Here we set the ensemble size as different values~($M=7$ in the original implementation) and run experiments in four environments. As the results show in Figure~\ref{fig:ablation_ensemble_size}, variants with a small ensemble size (\textit{e.g.},~$M=3$ or $M=5$) can achieve identical asymptotic performance compared to the variant with a large ensemble size~(\textit{e.g.}, $M=7$) in three out of four environments.

\subsection{What about Importance Weighting via FVP instead of Rejection Sampling?}
\begin{figure*}[h]
    \centering
    \includegraphics[width=0.99\textwidth]{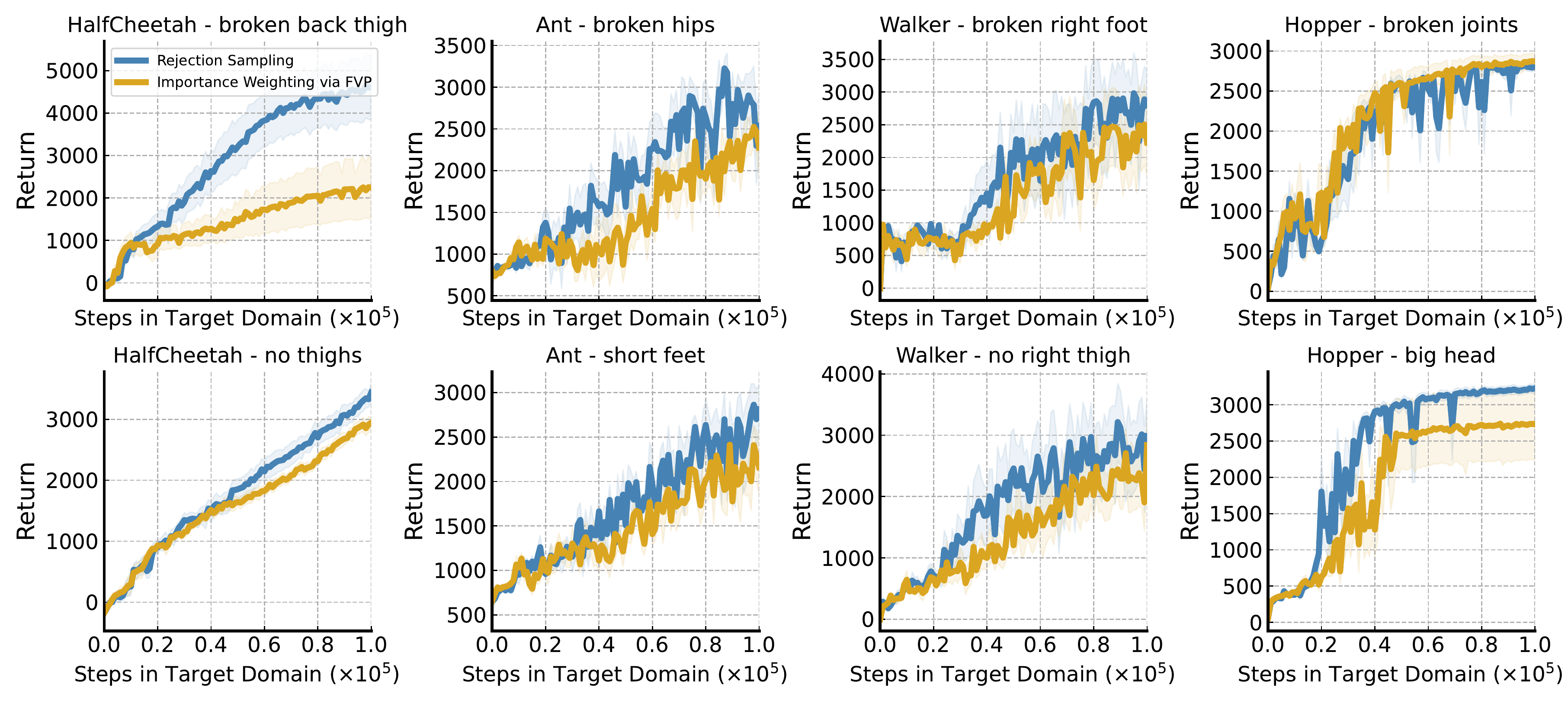}
    \caption{Performance of the variants with rejection sampling or importance weighting technique. The results demonstrate that the original algorithm using rejection sampling outperforms the variant using importance weighting via FVP in almost all environments.}
    \label{fig:ablation_importance_weighting}
\end{figure*}
In the case of data selection based on the estimated FVP~(fictitious value proximity in Eq.~(\ref{fvp_likelihood}), one may wonder about using importance weighting via the FVP rather than rejection sampling, which might be sample-inefficient due to the discarded partial data. Here we implement a variant of our algorithm that performs importance weighting with the estimated fictitious value proximity. Specifically, we train the value functions following:
\begin{align*}
    \theta_{i=1,2}~\leftarrow &~\underset{\theta_i}{\arg\min}~\dfrac{1}{2B}~\underset{\{(s,a,r,s')\}^{B}_{tar}}{\sum}\left[\left(Q_{\theta_i} - \mathcal{T}Q_{\theta_i}\right)^2\right] + \\
    &\qquad\qquad \dfrac{1}{2B}~\underset{\{(s,a,r,s')\}^{B}_{src}}{\sum}\left[\dfrac{\Lambda(s,a,s')}{\sum_{\{s,a,s'\}^B}~\Lambda(s,a,s')}\left(Q_{\theta_i} - \mathcal{T}Q_{\theta_i}\right)^2\right].
\end{align*}

We compare the variant with the original algorithm using rejection sampling in all eight environments and demonstrate the results in Figure~\ref{fig:ablation_importance_weighting}. The original algorithm using rejection sampling outperforms the variant with importance weighting in almost all environments. The accuracy of the value proximity depends on the generated state and the value function. Thus, the estimation of FVP could be biased due to the inaccurate dynamics models and value functions in the early training stage, in which case naively utilizing the source domain samples weighted by the FVP can harm the policy performance concerning the target domain. In contrast, rejection sampling that only utilizes a small portion of source domain samples alleviates the negative effect of the source domain samples.

\subsection{What about Data Filtering via Value instead of FVP?}
\begin{figure*}[h]
    \centering
    \includegraphics[width=0.99\textwidth]{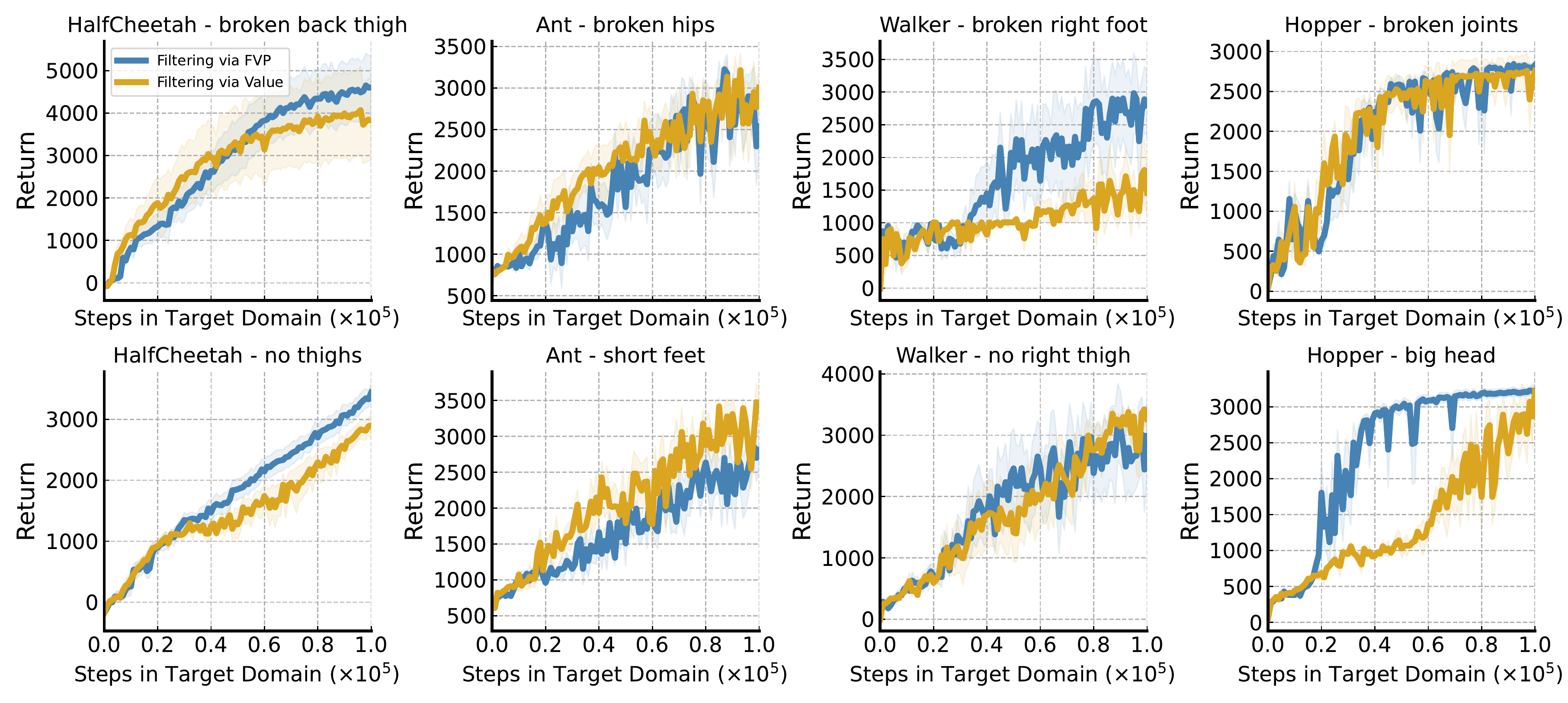}
    \caption{Performance of the variants that employ data filtering based on Value or FVP. The results demonstrate that the original algorithm outperforms the variant using data filtering via Value in four of eight environments.}
    \label{fig:ablation_high_value}
\end{figure*}

Prior works have examined sharing data across tasks with different reward functions rather than dynamics~\cite{yu2021conservative}. To investigate whether selectively sharing data with a high Q value can address the online dynamics adaptation problem, we propose a variant of our algorithm that shares partial data with a relatively high Q value from the source domain. Specifically, we train the value functions following:
\begin{align*}
    \theta_{i=1,2}~\leftarrow &~\underset{\theta_i}{\arg\min}~\dfrac{1}{2B}~\underset{\{(s,a,r,s')\}^{B}_{tar}}{\sum}\left[\left(Q_{\theta_i} - \mathcal{T}Q_{\theta_i}\right)^2\right] + \\
    &\qquad\qquad\dfrac{1}{\lfloor2B\cdot\xi\%\rfloor}~\underset{\{(s,a,r,s')\}^{B}_{src}}{\sum}\left[\mathds{1}\left({Q_{\theta_i}(s,a) > Q_{\xi\%}}\right)\left(Q_{\theta_i} - \mathcal{T}Q_{\theta_i}\right)^2\right],
\end{align*}

where $Q_{\xi\%}$ is the top $\xi$-quantile Q value of a batch of source domain samples. We set $\xi\%$ as $25\%$, the same as our implementation. We compare the variant with the original algorithm in all eight environments and demonstrate the results in Figure~\ref{fig:ablation_high_value}. The results demonstrate that the original algorithm outperforms the variant using data filtering via value in four of eight environments. Due to the dynamics mismatch, a state-action pair from the source domain will lead to inconsistent states concerning two domains. Therefore, directly utilizing the transitions with high Q value without considering the consistency of the next state would provide a counterfactual value target for the state-action pair, which can result in an improper value estimation for learning.

\subsection{Comparison with Dynamics-guided Data Filtering}
\label{add:dgdf}

\begin{figure}[t]
    \centering
    \includegraphics[width=\textwidth]{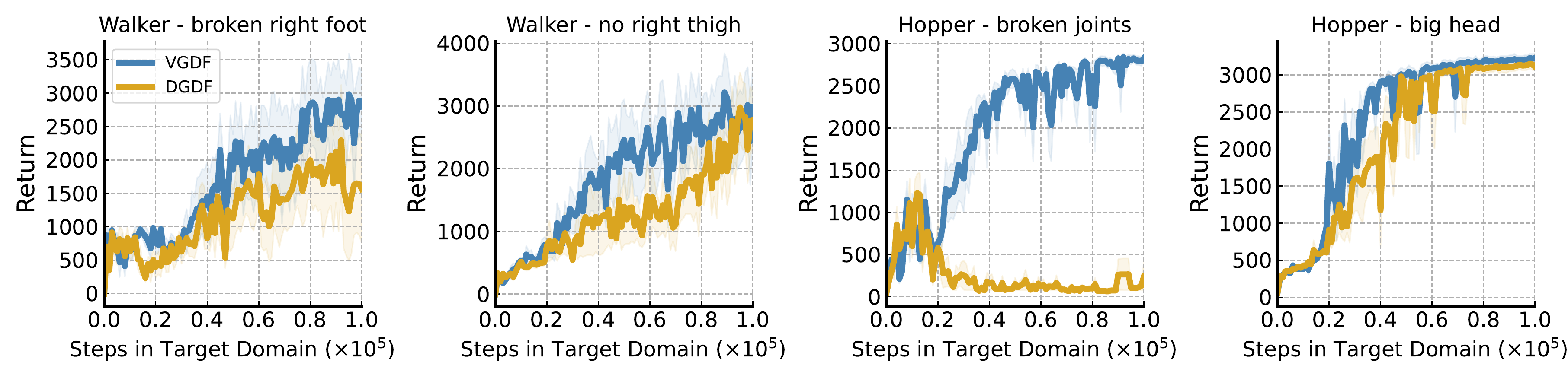}
    \caption{Comparison with the variant performing data filtering based on estimated dynamics discrepancies. The results demonstrate that the original algorithm outperforms the variant using data filtering via Value in four of eight environments, validating the effect of the value consistency.}
    \label{fig:ablation_dynamics_guided_data_filtering}
\end{figure}

To investigate the effect of value consistency, we perform the ablation study by comparing VGDF to a variant that shares partial data based on dynamics discrepancies,~\textit{i.e.},~Dynamics-guided Data Filtering~(DGDF). Specifically, we estimate the dynamics discrepancy via the learned classifiers following the prior works~\cite{eysenbach2020off,niu2022trust}. Same as VGDF, we share the source domain transitions whose estimated dynamics difference is smaller than the quantile value. We set the selection ratios as $25\%$, the same as our implementation.
The results demonstrate that the original algorithm outperforms the variant in three out of four environments, validating the superior effect of the value consistency compared to the dynamics discrepancy.

\subsection{Extended Results in Pybullet Environments}

\begin{table}[t]
    \footnotesize
    \centering
    \caption{Results in PyBullet environments. We evaluate the algorithms via the performance of the learned policy in the target domain and report the mean and std of the results across five runs with different random seeds. $(\#~\text{source},\#~\text{target})$ denotes the number of source domain data versus the numbder of target domain data. HC and HP denote HalfCheetah and Hopper, respectively.}
    \vspace{0.85em}
    \begin{tabular}{c|ccc|ccc}
        \toprule
         & DARC & Finetune & VGDF & DARC & Finetune & VGDF \\
         \midrule
         $(\#~\text{source},~\#~\text{target})$ & $200k,~20k$ & $1\text{M},~20k$ & $200k,~20k$ & $1\text{M},~100k$ & $1\text{M},~100k$ & $1\text{M},~100k$ \\
         \midrule
         PyBullet - HC & $304~\pm211$ & $653~\pm51$ & \hl{~$770$~}$\pm203$ & $679~\pm131$ & $678~\pm38$ & \hl{~$808$~}$\pm89$ \\
         PyBullet - HP & $73~\pm20$ & $240~\pm189$ & \hl{~$957$~}$\pm39$ & $99~\pm20$ & $869~\pm32$ & \hl{~$1006$~}$\pm2$ \\
        \bottomrule
    \end{tabular}
    \label{tab:bullet_results}
\end{table}

To investigate the generality of VGDF, we perform additional experiments in PyBullet-HalfCheetah and PyBullet-Hopper from PyBullet environments~\cite{coumans2021} which utilize Bullet as the physical engine instead of Mujoco. We first provide the details of the dynamics gap in the environments. In both environments, we regard the original environments as the source domains. In PyBullet-Hopper, we devised the target domain by increasing the torso size from 0.05 to 0.15, to simulate the morphology change. In PyBullet-HalfCheetah, we constrain the joint range of the front thigh from $[-1.5, 0.8]$ to $[-1.5, 0.4]$, and the joint range of the front shin from $[-1.2,1.1]$ to $[-1.2,0.1]$, to simulate the broken joint scenario that is widely used in related works. 

The results are shown in the Table~\ref{tab:bullet_results}, and we report the performance of all algorithms concerning different numbers of target domain samples. All results are averaged across five runs with different seeds. The results demonstrate that VGDF consistently outperforms baselines given different number of target domain data, demonstrating the generality of our method.

\end{document}